\title{List Learning with Attribute Noise}
\author{Mahdi Cheraghchi\thanks{University of Michigan, Ann Arbor, {\tt mahdich@umich.edu}}
\and Elena Grigorescu\thanks{Purdue University, {\tt elena-g@purdue.edu}}
\and Brendan Juba\thanks{Washington University in St.\ Louis, {\tt bjuba@wustl.edu}}
\and Karl Wimmer\thanks{Duquesne University, {\tt wimmerk@duq.edu}}
\and Ning Xie\thanks{Florida International University, {\tt nxie@cis.fiu.edu}}
}
\date{}
\begin{document}

\maketitle
\begin{abstract}
   We introduce and study the model of list learning with attribute noise. Learning with attribute noise was introduced by Shackelford and Volper (COLT 1988) as a variant of PAC learning, in which the algorithm has access to noisy examples and  uncorrupted labels, and the goal is to recover an accurate hypothesis. Sloan (COLT 1988) and Goldman and Sloan (Algorithmica 1995) discovered information-theoretic limits to learning in this model, which have impeded further progress. In this article we 
   extend the model to that of list learning, drawing inspiration from the list-decoding model in coding theory, and its recent variant studied in the context of learning.
   On the positive side, we show that sparse
   conjunctions can be efficiently list learned
   under some assumptions on the underlying
   ground-truth distribution. 
   On the negative side, 
   our results show that even in the list-learning model, efficient learning of parities and majorities is not possible regardless of the representation used.

\end{abstract}

\section{Introduction}

\newcommand{\calC}{\mathcal C}
\newcommand{\calR}{\mathcal R}

We study the attribute-noise PAC learning model, introduced by Shackelford and Volper~\cite{ShackelfordV88}, in which learning must be achieved despite the presence of errors that corrupt the \emph{attributes} of the data (instead of the \emph{labels} of the data that are more commonly used in the learning with error setting). The inherent difficulty in learning with attribute noise has been formalized by Sloan~\cite{Sloan88} and Goldman and Sloan~\cite{GoldmanS95} by showing information-theoretic barriers: in the presence of attribute noise, regardless of how much data is used, it is impossible to identify which representations are accurate. Historically, similar issues of identifiability were tackled in coding theory by relaxing the notion of a solution to that of {\em list decoding}~\cite{elias,wozencraft}; more recently, a similar notion of \emph{list-learning} has been proposed to provide solutions in other learning settings where a correct solution simply cannot be identified from the given data~\cite{BalcanBV08, CharikarSV17, DiakonikolasKS18, karmalkar2019list, raghavendra2019list}. We further discuss this previous work in Section~\ref{sec:relwork}.
In this work, we ask when and to what extent it is possible to overcome the non-identifiability barrier posed by attribute noise by relaxing the solutions to lists of representations of Boolean functions.

In the attribute-noise model the task is to learn a labeling function given labeled examples, where the examples may have corrupted entries.
More formally, the algorithm has access to pairs $(\tilde{x}, c(x))$, where $x=(x_1, x_2, \ldots, x_n)\in X$ is chosen uniformly and independently from an unknown distribution $\calD$ over  $X$, $c\in \calC$ is an unknown labeling function from a concept class $\calC$ over domain $X$, and  $\tilde{x}$ is obtained from $x$ by applying a noise vector $\rho=(\rho_1, \rho_2, \ldots, \rho_n)$ from a noise distribution that affects the coordinates (a.k.a. attributes) of $x$; the goal is to output,  with probability $1-\delta$,  a hypothesis $c'$ that is $(1-\epsilon)$-accurate with respect to $c$ over $\calD$, namely $\Pr_{x\in \calD}[c(x)=c'(x)]>1-\epsilon$.  Hence, while in the standard PAC-learning model of Valiant~\cite{Valiant84} the algorithm  has access to $\tilde{x}=x$ --- namely actual samples from the input distribution, in the attribute-noise version, the algorithm only has access to a noisy version of $x$, making the task of learning the labeling function significantly more difficult.
 
 The attribute-noise model captures a setting in which one seeks an accurate model of dependencies in the ``ground truth'' process captured by $\calD$ and $c$, in spite of errors in the {\em recording} of the data. For example, this formulation is appropriate for the task of formulating models in data-driven science; a small {\em list} of candidate functions in such a setting then corresponds to a list of possible hypotheses for further investigation. It stands in contrast to the (much easier) {\em label noise} model, which captures the task of making accurate predictions from the observed data while the observed data is generated from an unknown concept which may not match $c$. Indeed, if one is only interested in forecasting or building a device that works directly with the noisy data $\tilde{x}$ produced by given real-world sensors, such a setting may be captured by a suitable label-noise model. We stress that since accuracy in the attribute-noise model is assessed with respect to $\calD$, which is never observed directly, the attribute-noise model is {\em not} captured by the label noise model, and is indeed much more challenging than the label noise model.

All previous work studies concept classes over Boolean attributes $x_i\in\{0,1\}$ for all $i\in [n]$, and Boolean labeling functions $c:\{0,1\}^n\rightarrow \{0, 1\}$.  Specifically, Shackelford and Volper~\cite{ShackelfordV88} show that under {\em uniform random attribute noise}, where the noise flips each coordinate independently with probability $p\in [0,1]$, it is possible to learn $k$-DNF expressions and conjunctions efficiently, if the noise rate $p$ is known by the algorithm. In fact, the knowledge of $p$ is not necessary for efficient learning, as proved by Goldman and Sloan~\cite{GoldmanS95}. They further consider {\em product random attribute noise} on conjunctions, where coordinates are affected independently by noise of possibly different rates $p_i$, and prove that if these rates are unknown, and if  $p_i>2\epsilon$ in each coordinate, then it is information-theoretically impossible to recover any $(1-\epsilon)$-accurate hypothesis. Hence, regardless of the running time of the algorithm, and the number of samples received, the algorithm is unable to output a good answer. On the other hand, if the noise rates are known, Decatur and Gennaro~\cite{DecaturG95} provide efficient algorithms for PAC-learning conjunctions and $k$-DNF formulas. Further, \cite{BshoutyJT03} studies noise distributions that are unconstrained or unknown, but where the examples come from the uniform distribution. 

We emphasize that the attribute-noise model is not captured by noisy-PAC. Indeed, the celebrated results of  Anguin and Laird \cite{AngluinL87} show that learning is the noisy PAC model is information theoretically possible for any noise rate $\rho<1/2$, and in fact $k$-CNF and $k$-DNFs can be learned efficiently in this high-noise regime. Again, this is in contrast with the attribute-noise setting where identifiability is not possible for unknown noise rate $\rho > 2\eps$ per coordinate \cite{GoldmanS95}. One can also view attribute noise as an intermediate between noisy PAC and malicious noise, in which the assumption is that $1-\rho$ fraction of the output is correct, and the remaining $\rho$ fraction may be completely irrelevant. Kearns and Li \cite{KearnsL93} show that in this model in  order to identify an $\eps$-accurate hypothesis one must have $\rho<\eps/(1+\eps).$

Motivated by its applications in certain real-world machine learning scenarios,
as well as its apparent difficulty, we revisit the learning with attribute noise model and study it under 
product random attribute noise, in which the noise rates are {\em not} known. We overcome the information-theoretic impossibility result of \cite{Sloan88,GoldmanS95} by allowing the algorithm output a small {\em list} of labeling functions that contains one which is accurate. Thus, even if it is impossible to identify a single accurate function, we can hope to produce a small list of candidate hypotheses that contains an accurate one. Indeed, the proof of~\cite{Sloan88, GoldmanS95} follows from an explicit construction of two pairs $(\calD_1, c_1, \calR_1)$ and $(\calD_2, c_2, \calR_2)$ of distributions, distinct dictators as labeling functions, and product noise distributions, respectively. The two pairs of tuples lead to exactly the same observed distribution over the $n+1$ bits received $(\tilde{x}, c(x))$, when $\nu>2\epsilon$, where $\nu$ is an upper bound  on the noise amount per attribute. In the list-learning model the algorithm is allowed to output both solutions. In fact,  as in PAC learning, any $(1-\eps)$-accurate hypothesis with respect to the input distribution $\calD_i$ is a valid solution to the learning problem, hence it is enough to outputs a small {\em net} of hypotheses that covers all the valid inputs, in the sense that for any valid input that could have resulted in the observed distribution, the list contains a hypothesis that is $(1-\eps)$-accurate with respect to that input.

Our results provide some sufficient conditions where efficient list learning is still possible despite the previous barriers. We also show strong lower bounds for most natural classes of Boolean functions.

\subsection{The model: list learning with attribute noise}

We denote by an instance of the attribute learning problem to be a tuple $(\calD, c, \calR)$, where $\calD$ is the unknown distribution from which the algorithm receives noisy samples,  $c$ is the labeling function, and $\calR$ is the noise distribution. 
We will denote by $\tilde{\calD}$ the observed distribution of $(\tilde{x}, c(x))$, where $\tilde{x}=x+\rho$, and $x\leftarrow \calD$ and $\rho\leftarrow \calR.$ We will often abuse notation and denote the marginal distribution on $\tilde{x}$ by $\tilde{\calD}$ as well.

For an observed distribution $\tilde{\calD}$, a {\em net} $\calH$ (specifically, an {\em $\eps$-net}) is a set of $(1-\eps)$-accurate solutions such that for any tuple $(\calD, c, \calR)$ that could have resulted in the observed distribution $\tilde{\calD}$, there exists  $h\in \calH$ that is a $(1-\eps)$-accurate solution with respect to $c$ and $\calD$. 

Inspired by the list-decoding model in coding theory, we seek answers to the following general questions:
\begin{enumerate}
\item (Combinatorial): Does there exist a small net $\calH$ for the attribute noise learning problem with observed distribution $\tilde{\calD}$?
\item (Algorithmic): Can a net for the attribute noise learning problem with observed distribution $\tilde{\calD}$ be computed efficiently?
\end{enumerate}

We formalize these notions below, in the attribute-noise PAC-learning model, with product random noise.

\begin{definition}(List learning with random product attribute noise)
Let $\calC$ be a concept class containing Boolean functions $c:\{0,1\}^n\rightarrow \{0,1\}$, $\calD$ a distribution over $\{0,1\}^n$,  let 
 $\nu, \epsilon \in (0,1)$, and  $0\leq p_1, \ldots, p_n \leq \nu$.
 Let $\calR$ be noise distribution defined as the product of $n$ independent Bernoulli distribution with parameters $p_i$, $i\in [n].$

 \begin{enumerate}
     \item (Combinatorial) $\calC$ is said to be {\em list-learnable} with list size $\ell=\ell(\nu, \eps)$ if there exists a net $\calH$ for the solutions of the attribute noise learning problem with input distribution $\calD$, such that $|\calH|\leq \ell$. 
     \item (Algorithmic) $\calC$ is said to be {\em algorithmically list learnable} if there exists a randomized algorithm outputting 
 all $h\in \calH$  with probability $1-\delta$ in time proportional to $\ell$.
 \end{enumerate}

 \end{definition}

\subsection{Our results}

First, we show that the classes of parities and majorities are not amenable to efficient list learning, as every net for them has exponential size, regardless of the representation used for the net. More generally, we obtain our lower bound for any symmetric family of functions with sufficiently high {\em noise sensitivity}. (Recall that the noise sensitivity under $\rho$ noise, $\NS_{\rho}(f)$, is the probability the value of $f$ changes when its inputs are corrupted by product noise of rate $\rho$.)

\begin{theorem}\label{thm:symmetric:inf}
(Theorem~\ref{thm:symmetric}, informal)
Let $f$ be a   symmetric function $f:\{0,1\}^{n/2}\rightarrow \{0,1\}$. Let $\calF_f$ be the family of functions  on $n$ bits containing all functions $f_S$ obtained by instantiating  $f$ on the set $S\subset [n]$ with $|S|=n/2$. 
Let $\rho>0$. Suppose $\epsilon\leq (\frac12-o(1))\NS_{\rho/15}(f)$.  Then 
if for every $f_S\in \calF_f$ and distribution $\calD$ on $\bx$ there is an $h\in\calH$ satisfying $\Pr_{\bx \sim \calD}[f_S(x) \neq h(x)]<\eps$, then $|\calH|>2^{\Omega(n)}.$
\end{theorem}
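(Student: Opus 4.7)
The plan is to reduce to the uniform distribution on $\{0,1\}^n$ and carry out a Fourier-analytic packing argument. It suffices to take $\calD = U_n$ (the uniform distribution) for every $f_S$: then $\calH$ must contain, for each $|S|=n/2$, some $h$ with $d(h, f_S) < \eps$, where $d(g, g') := \Pr_{x \sim U_n}[g(x) \neq g'(x)]$. If two functions $f_S, f_{S'}$ are both covered by the same $h \in \calH$, the triangle inequality forces $d(f_S, f_{S'}) < 2\eps$, so it is enough to upper-bound the $d$-ball of radius $2\eps$ around any $f_S$ inside $\calF_f$ and compare to $|\calF_f| = \binom{n}{n/2}$. (The $f_S$'s are distinct when $f$ is non-constant, which is forced since $\NS_{\rho/15}(f) > 0$ is required for the hypothesis to be non-vacuous.)

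The heart of the argument is the Fourier-analytic inequality
\[
d(f_S, f_{S'}) \;\geq\; \NS_{1/2 - m/n}(f), \qquad m := |S \cap S'|.
\]
To prove it, observe that the symmetry of $f$ gives $\hat f(T) = \alpha_{|T|}$ in the $\pm 1$-Fourier basis on $\{-1,1\}^{n/2}$, which extends to $\widehat{f_S}(U) = \alpha_{|U|}\,\mathbf{1}[U \subseteq S]$ on $\{-1,1\}^n$. A direct computation then yields $\mathbb{E}_{U_n}[f_S f_{S'}] = \sum_{U \subseteq S \cap S'} \alpha_{|U|}^2 = \sum_j \binom{m}{j}\alpha_j^2$. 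Using the elementary bound $\binom{m}{j}/\binom{n/2}{j} \leq (2m/n)^j$ (valid for $m \leq n/2$), this is at most $\sum_j \binom{n/2}{j}(2m/n)^j \alpha_j^2 = \mathrm{Stab}_{2m/n}(f)$; translating via the identity $\NS_\rho(f) = (1 - \mathrm{Stab}_{1 - 2\rho}(f))/2$ gives the claim.

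Since $\NS_\rho(f)$ is non-decreasing in $\rho \in [0, 1/2]$ (equivalent to monotonicity of $\mathrm{Stab}_\lambda$ in $\lambda \in [0,1]$), whenever $|S \triangle S'| \geq 2\rho n /15$ we have $1/2 - m/n \geq \rho/15$ and thus $d(f_S, f_{S'}) \geq \NS_{\rho/15}(f) > 2\eps$ by the hypothesis $\eps \leq (\tfrac12 - o(1))\NS_{\rho/15}(f)$. Hence any single $h$ covers only $f_{S'}$ with $|S \triangle S'| < 2\rho n / 15$, a set of size at most
\[
\sum_{k < \rho n / 15} \binom{n/2}{k}^2 \;\leq\; O(n) \cdot 2^{n H(2\rho/15)} \;=\; 2^{n - \Omega(n)},
\]
by the standard entropy bound (using $H(2\rho/15) < 1$ for $\rho \in (0,1)$). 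Dividing $|\calF_f| = \binom{n}{n/2}$ by this ball size gives $|\calH| \geq 2^{\Omega(n)}$. The main technical hurdle is the Fourier/noise-sensitivity inequality above; the remainder is entropic counting. The specific constant $1/15$ in the hypothesis is inessential to the approach---any constant fraction of $\rho$ strictly less than $1/2$ would yield an analogous $2^{\Omega(n)}$ bound, albeit with a different exponent constant.
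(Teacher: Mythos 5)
Your Fourier-analytic lemma $d(f_S,f_{S'}) \ge \NS_{1/2-m/n}(f)$ (with $m=|S\cap S'|$) is correct, and the entropic counting that follows it is sound. But there is a genuine gap in the reduction that precedes it: you take $\calD = U_n$ for every $f_S$, and this discards the attribute-noise indistinguishability that the lower bound needs. A net is only required to cover the tuples $(\calD, c, \calR)$ that are \emph{all consistent with one and the same observed distribution} $\tilde\calD$. Under $U_n$ with product attribute noise, the observed joint distribution of $(\tilde x, f_S(x))$ is different for each $S$ --- the label is correlated with the noisy coordinates in $S$ and independent of the others --- so a learner can identify $S$ from its samples and output a single hypothesis. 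What you prove, that covering every $f_S$ simultaneously under $U_n$ forces $|\calH|=2^{\Omega(n)}$, is true but is not a list-learning lower bound; it reads the informal statement too literally and loses the content of the formal Theorem~\ref{thm:symmetric}.

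The paper's construction supplies the missing ingredient. It groups the $n$ coordinates into $n/2$ pairs $(x_{2i+1},x_{2i+2})$, lets $\calD^z$ draw each pair as a pair of $\rho$-correlated uniform bits, and lets the noise $\calR^z$ flip the $z$-designated half of each pair at rate $\rho$. Then $\calD^z$ is the same distribution $\calD$ for every $z$, and the observed distribution of $(\tilde x, f^z(x))$ is literally identical for every $z$, so the net must cover all $2^{n/2}$ candidates. The step playing the role of your Fourier bound is $\Pr_{\calD}[f^z\ne f^{z'}]=\NS_{S,\rho}(f)\ge(1-o(1))\NS_{\rho/15}(f)$ for $|S|=|z-z'|\ge n/14$, established by comparing $\NS_{S,\rho}$ at a fixed $|S|$ to the averaged $\NS_{\rho/15}$ via a Chernoff bound (no Fourier needed). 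Your stability computation is an attractive alternative for the separation step, but it is carried out on $U_n$ rather than on the $\rho$-correlated product distribution $\calD$; without the indistinguishability argument, the proposal does not prove the theorem.
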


Two immediate corollaries follow:

\begin{corollary}
Taking $f(x_1, x_2, \ldots, x_{n/2})=\sum_{i=1}^{n/2} x_i$, namely $f = \mathrm{PARITY}_{n/2}$, 
in Theorem \ref{thm:symmetric:inf}, the lower bound holds for any $\rho>0$ and $\eps<\frac14-o(1).$ 

\end{corollary}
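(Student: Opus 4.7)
The plan is a direct instantiation of Theorem~\ref{thm:symmetric:inf}: the only substantive ingredient needed is the exact noise sensitivity of parity, which admits a clean closed form and tends to $1/2$ as the input length grows.

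First I would compute $\NS_q(\mathrm{PARITY}_m)$ for general $m$ and $q$. Under independent bit-flips at rate $q$, the value of $\mathrm{PARITY}_m$ flips exactly when an odd number of coordinates are flipped. Writing the parity-flip probability via a generating-function/binomial identity yields the standard formula
\[
\NS_q(\mathrm{PARITY}_m) \;=\; \Pr[\text{odd \# of flips among $m$ Bernoulli$(q)$ coins}] \;=\; \frac{1-(1-2q)^m}{2}.
\]

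Next, I would specialize to the parameters of Theorem~\ref{thm:symmetric:inf} by setting $m = n/2$ and $q = \rho/15$. For any fixed $\rho \in (0,1]$, we have $|1 - 2\rho/15| < 1$, so $(1-2\rho/15)^{n/2} \to 0$ as $n \to \infty$. Consequently
\[
\NS_{\rho/15}(\mathrm{PARITY}_{n/2}) \;=\; \frac{1}{2} - o(1).
\]

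Finally I would plug this into the hypothesis of Theorem~\ref{thm:symmetric:inf}. The condition $\epsilon \leq (\tfrac{1}{2} - o(1))\,\NS_{\rho/15}(f)$ becomes $\epsilon \leq (\tfrac{1}{2} - o(1))(\tfrac{1}{2} - o(1)) = \tfrac{1}{4} - o(1)$, so for any $\epsilon < \tfrac{1}{4} - o(1)$ the theorem yields $|\calH| > 2^{\Omega(n)}$, establishing the corollary. No real obstacle is present here: once the elementary noise-sensitivity formula for parity is on hand, the corollary is a one-line substitution into the informal theorem. The only minor sanity check is that the $o(1)$ terms genuinely vanish uniformly for any fixed $\rho > 0$, which follows from $(1-2\rho/15)^{n/2}$ decaying exponentially in $n$.
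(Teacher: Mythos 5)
Your proposal is correct and is exactly the argument the paper implicitly relies on (the corollary is stated without proof as an immediate substitution). The computation $\NS_q(\mathrm{PARITY}_m) = \tfrac{1-(1-2q)^m}{2}$, the observation that this tends to $\tfrac12$ for any fixed $\rho>0$ as $n\to\infty$, and the substitution into the hypothesis of Theorem~\ref{thm:symmetric:inf} are all what's needed, and you carry them out correctly.
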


\begin{corollary}
Taking $f(x_1, x_2, \ldots, x_{n/2})=\mathrm{MAJORITY}({x_1, x_2, \ldots, x_{n/2}})$, namely $f = \mathrm{MAJORITY}_{n/2}$, 
in Theorem \ref{thm:symmetric:inf}, the lower bound holds for any $\rho > 0$ and $\eps<\Omega(\sqrt \rho).$ 

\end{corollary}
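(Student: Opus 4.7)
The plan is to apply Theorem~\ref{thm:symmetric:inf} with $f = \mathrm{MAJORITY}_{n/2}$ and reduce the corollary to a standard estimate of the noise sensitivity of the majority function. The condition we must verify is $\eps \leq (\tfrac12 - o(1))\,\NS_{\rho/15}(\mathrm{MAJ}_{n/2})$, so the entire content of the corollary lies in showing that $\NS_{\rho/15}(\mathrm{MAJ}_{n/2}) = \Omega(\sqrt{\rho})$ as $n \to \infty$.

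First, I would recall the classical Sheppard-type formula for the noise stability of majority: as $m \to \infty$,
\[
\NS_\rho(\mathrm{MAJ}_m) \;\longrightarrow\; \frac{1}{\pi}\,\arccos(1-2\rho).
\]
This can be derived by viewing the two correlated inputs $x$ and $\tilde x = x + \rho$ (with uniform $x$) as jointly Gaussian vectors in the CLT limit, so that $\Pr[\mathrm{MAJ}_m(x) \neq \mathrm{MAJ}_m(\tilde x)]$ converges to the probability that two jointly Gaussian variables with correlation $1-2\rho$ disagree in sign, which is exactly $\arccos(1-2\rho)/\pi$. For small $\rho$, $\arccos(1-2\rho) = 2\sqrt{\rho} + O(\rho^{3/2})$, so
\[
\NS_{\rho/15}(\mathrm{MAJ}_{n/2}) \;=\; \frac{2}{\pi}\sqrt{\rho/15} \,+\, o(1) \;=\; \Omega(\sqrt{\rho}).
\]
(Strictly speaking, Theorem~\ref{thm:symmetric:inf} is applied to majority on $n/2$ variables, but this is where the CLT applies as $n \to \infty$; for finite $n/2$, the same asymptotic bound holds up to lower-order corrections that are absorbed into the $\Omega(\cdot)$ and the $o(1)$ in front of $1/2$.)

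Plugging this into Theorem~\ref{thm:symmetric:inf}, the hypothesis becomes $\eps \leq (\tfrac12 - o(1)) \cdot \Omega(\sqrt{\rho}) = \Omega(\sqrt{\rho})$, and the theorem's conclusion then gives $|\calH| > 2^{\Omega(n)}$ under this bound on $\eps$, which is exactly the claim.

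The only non-routine step is the noise-sensitivity estimate, and even that is classical (it is the content of, e.g., Peres's theorem on noise stability of majority, and is standard in the analysis-of-Boolean-functions literature). The asymptotic scaling $\Theta(\sqrt{\rho})$ is tight: no better dependence on $\rho$ can be extracted via this route, which is why the corollary is stated with $\eps < \Omega(\sqrt{\rho})$ rather than $\eps < 1/4 - o(1)$ as in the parity case (parity has noise sensitivity tending to $1/2$ for any fixed $\rho > 0$, whereas majority's noise sensitivity vanishes as $\rho \to 0$).
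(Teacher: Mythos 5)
Your proof is correct and takes the same route the paper implicitly intends: the corollary reduces to the noise-sensitivity estimate $\NS_{\rho/15}(\mathrm{MAJ}_{n/2}) = \Omega(\sqrt{\rho})$, which is exactly the Sheppard/Central-Limit-Theorem formula $\NS_\rho(\mathrm{MAJ}_m) \to \tfrac1\pi\arccos(1-2\rho) \geq \tfrac{2}{\pi}\sqrt{\rho}$ you invoke. One small point worth flagging explicitly: you should also observe that the technical hypothesis in the paper's Claim --- that $\NS_{S,\rho}(f) = 2^{-o(n)}$ for $|S| = n/14$ --- is satisfied by majority for any fixed $\rho>0$, since noise on a constant fraction of coordinates flips majority with probability $\Omega(1)$ once $n$ is large; this is not automatic from your argument and is needed before Theorem~\ref{thm:symmetric} applies. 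Otherwise the reasoning, including the concluding remark contrasting the $\Theta(\sqrt\rho)$ scaling for majority with the $\Theta(1)$ scaling for parity, is exactly right.
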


We stress that since these lower bounds hold regardless of the representation used in the list, they give lower bounds for richer function classes that contain parities or majorities (respectively) as special cases, such as general linear threshold functions and so on. Of course, such a distinction between ``proper'' (representation-specific) and ``improper'' (representation-independent) solutions does not arise in coding theory, but is a common feature in learning theory. Improper learning is the main subject of interest in learning theory, but lower bounds against improper learning algorithms are usually much more challenging. The same holds here: it is generally much easier to argue that an exponential lower bound holds if the function is forced to be a parity function or a conjunction (see below), for example. 

\ignore{
\begin{theorem}
There exists a family $\mathcal{F}$ of functions $f:\{0,1\}^n \rightarrow \{0,1\}$, such that for any $0<\epsilon<1/4-o(1)$, and attribute noise $\rho>\eps$, any net $\calH$ of functions satisfying 
\[
\max_{z \in \{0,1\}^{n/2}} \min_{h \in \calH} \Pr_{\bx \sim \calD^z}[f^z(\bx) \neq h(\bx)]
<\epsilon\]
must have $|\calH|>2^{\Omega(n)}.$
 \end{theorem}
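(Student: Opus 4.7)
The plan is a counting argument on $\binom{[n]}{n/2}$ driven by noise sensitivity. I would take $\calD_S$ to be the uniform distribution on $\{0,1\}^n$ for every $S$, reducing the claim to the following: for any fixed hypothesis $h:\{0,1\}^n \to \{0,1\}$, the set
\[
\calS(h) \;:=\; \left\{S \in \binom{[n]}{n/2} : \Pr_{\bx \sim \mathrm{Unif}}[h(\bx) \neq f_S(\bx)] < \eps\right\}
\]
has size at most $2^{(1-\Omega(1))n}$. Since $\calH$ must intersect every $\calS(h)$ as $h$ ranges over the net, this gives $|\calH| \geq \binom{n}{n/2}/2^{(1-\Omega(1))n} = 2^{\Omega(n)}$.

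By the triangle inequality, whenever $S_1, S_2 \in \calS(h)$ one has $\Pr_{\bx}[f_{S_1}(\bx) \neq f_{S_2}(\bx)] < 2\eps$, which under the hypothesis $\eps \leq (\tfrac12 - o(1))\NS_{\rho/15}(f)$ yields $\Pr_{\bx}[f_{S_1}(\bx) \neq f_{S_2}(\bx)] < (1 - o(1))\NS_{\rho/15}(f)$.

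The core technical step is to turn this into a bound on $|S_1 \triangle S_2|$. Decompose $\bx$ into the three independent uniform blocks on $S_1 \cap S_2$, $S_1 \setminus S_2$, $S_2 \setminus S_1$; since $f$ is symmetric, $f_{S_i}(\bx)$ depends only on the Hamming weights of its two relevant blocks, so $\bx|_{S_2}$ can be regarded as $\bx|_{S_1}$ with its $|S_1 \triangle S_2| = 2t$ ``mismatched'' coordinates freshly resampled. A short coupling plus Chernoff calculation compares the Hamming-weight fluctuation of this ``concentrated'' noise model (rate $1/2$ on $2t$ coordinates) to that of standard Bernoulli$(\delta)$ noise with $\delta = \Theta(t/n)$; monotonicity of $\NS_{\delta}(f)$ in $\delta$ for symmetric $f$ then gives
\[
\Pr_{\bx}[f_{S_1}(\bx) \neq f_{S_2}(\bx)] \;\geq\; (1 - o(1))\NS_{\rho/15}(f)
\]
whenever $|S_1 \triangle S_2| \geq C\rho n$ for an appropriate absolute constant $C$. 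The factor $1/15$ absorbs the accumulated slack: the factor $2$ from the triangle inequality, the variance-matching constant between the two noise models, and the Chernoff deviations. Thus every pair in $\calS(h)$ is within Hamming distance $C\rho n$ inside $\binom{[n]}{n/2}$.

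Fixing any $S_0 \in \calS(h)$, the remaining elements of $\calS(h)$ lie in the Hamming ball of radius $C\rho n$ around $S_0$, of size at most $\binom{n/2}{C\rho n/2}^2 = 2^{H(C\rho)\cdot n\cdot(1+o(1))}$. For $\rho \leq 1$ and $C$ sufficiently small this is $2^{(1-\Omega(1))n}$, completing the counting. The main obstacle I expect is the technical comparison between the ``concentrated'' noise model and genuine Bernoulli$(\rho/15)$ noise for an arbitrary symmetric $f$ — this is where the constant $1/15$ must be precisely calibrated — requiring a careful monotonicity argument for noise sensitivity of symmetric functions together with sharp tail bounds on the Hamming-weight difference between the two models.
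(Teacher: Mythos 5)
The essential missing idea is the construction of the distribution $\calD^z$: you silently replace it by the uniform distribution on $\{0,1\}^n$, but the paper's $\calD^z$ is crafted to be the $\rho$-correlated-pairs distribution, in which each pair $(x_{2i+1},x_{2i+2})$ has one bit uniform and the other a $\rho$-noisy copy, so that (i) $\calD^z$ is literally the same distribution for every $z$, and (ii) the \emph{observed} noisy distribution $\langle N^z_\rho(\bx), f^z(\bx)\rangle$ is also the same for every $z$. Property (ii) is what makes the lower bound a statement about attribute-noise \emph{nets}: the list must cover every $(\calD,c,\calR)$ that could have produced the fixed observed distribution, and the construction guarantees that all $2^{n/2}$ choices of $z$ are indistinguishable to the learner. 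Under your uniform $\calD_S$, the conditional label distribution $\Pr[f_S(\bx)\mid\tilde\bx]$ depends on $S$, so the observed distributions differ and a learner could in principle identify $S$ without producing a large list. You are therefore proving a packing bound for the family $\{f_S\}$ under a distribution that has no role in the attribute-noise problem; the parameter $\rho$ in your threshold $|S_1\triangle S_2|\geq C\rho n$ is detached from the actual attribute-noise rate.

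A second, related discrepancy: under uniform $\calD$, passing from $\bx|_{S_1}$ to $\bx|_{S_2}$ amounts to applying rate-$1/2$ noise to the $|S_1\setminus S_2|$ mismatched coordinates, so your distance calculation compares $\NS_{T,1/2}$ to $\NS_{\rho/15}$. Under the paper's $\calD^z$, the corresponding step gives exactly $\NS_{S,\rho}$ (not rate $1/2$), because $\bx^{z'}$ is a \emph{$\rho$-noisy} copy of $\bx^z$ on the disagreement set. The $\rho$ then appears in the right place throughout. Your coupling/Chernoff sketch and the monotonicity of $\NS_{T,\rho}$ in $|T|$ for symmetric $f$ are essentially the right technical tools (they are the same ones used in the paper's claim deriving $\NS_{S,\rho}(f)\geq(1-o(1))\NS_{\rho/15}(f)$ for $|S|=n/14$), and your counting template (triangle inequality, bound the ``cloud'' of covered functions around each $h$, divide) matches the paper's proof. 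The gap is solely in the choice of $\calD^z$: you need the correlated-pairs construction to obtain both the identical observed distribution and the $\rho$-noise (rather than $1/2$-noise) pairwise distance.
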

 }

Our main results focus on conjunctions, for which we give a general lower bound, and an upper bound for a specific restriction on the input distribution on examples.

\begin{theorem}\label{thm:lb-conjunctions}
(Theorem~\ref{thm:conj:lower}, informal)
Let $k>0$ be an integer, $\epsilon>0$, and let  $\mathcal{C}_k$ be the set of all conjunctions over $k$ bits out of $n$ bits $f:\{0,1\}^n \rightarrow \{0,1\}$. If the attribute noise is $\rho=\frac1k>8\epsilon$, 
then there is an input distribution $\calD$ such that list learning $\mathcal{C}_k$ under $\calD$
with accuracy $\eps$ would require a list of size
$|\calH|>2^{\Omega(k)}.$

\end{theorem}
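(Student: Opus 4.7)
The plan is to produce, for a suitable input distribution $\calD$ on $\{0,1\}^n$ with $n=2k$, an observed distribution $\tilde{\calD}$ together with a family of $M = \binom{n}{k} = 2^{\Omega(k)}$ valid tuples $\{(\calD, c_{S_i}, \calR_i)\}$ ranging over all size-$k$ subsets $S_i$ of $[n]$, all producing this same $\tilde{\calD}$, while the conjunctions $c_{S_i}$ are pairwise more than $2\eps$-far under $\calD$. Since a single Boolean function cannot be $(1-\eps)$-accurate for two concepts that are more than $2\eps$ apart on a common distribution, any net $\calH$ for $\tilde{\calD}$ must then contain $M$ distinct hypotheses, yielding $|\calH|\ge 2^{\Omega(k)}$.

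For the separation, I would take $\calD$ to be the product Bernoulli on $\{0,1\}^{2k}$ with bias $q = k/(k+1)$, so that each $c_{S_i}$ fires with probability $q^k \ge 1/e$. A direct computation gives $\Pr_\calD[c_{S_i}\neq c_{S_j}] = 2q^k\bigl(1-q^{|S_i\triangle S_j|/2}\bigr)$, which is at least $2q^k(1-q) \ge 2/(e(k+1))$ for any two distinct $k$-subsets. Under the hypothesis $\rho = 1/k > 8\eps$ we have $2\eps < 1/(4k)$, which is smaller than $2/(e(k+1))$ for every $k\ge 1$, so the family is pairwise $2\eps$-separated on $\calD$. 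Combined with $\binom{2k}{k}=2^{\Omega(k)}$, this gives the required number of mutually distant concepts.

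The crux of the proof is constructing the noise distributions $\calR_i$ so that every tuple induces \emph{exactly} the same joint law of $(\tilde{x}, c_{S_i}(x))$. The key reason one should expect this to be possible is that at the critical level $\rho = 1/k$, the expected number of bit-flips inside the conjunction's $k$-bit support equals one, which is precisely the ``noise bandwidth'' needed to conceal the identity of $S_i$. I plan to parametrize each $\calR_i$ by rates $p_\ell^{(i)}\in[0,1/k]$ that are asymmetric on coordinates inside versus outside $S_i$, first equalize the marginal distribution of $\tilde{x}$ and of the label across $i$, and then match the label-conditional distributions of $\tilde{x}$ by exploiting the product structure of $\calD$ to reduce the problem to a handful of one-dimensional moment conditions. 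The main obstacle is certifying that a common admissible solution exists under the tight constraint $p_\ell^{(i)}\le 1/k$ simultaneously for all $M$ subsets $S_i$; this is precisely where the quantitative hypothesis $\rho = 1/k > 8\eps$ must be used and is the technical heart of the argument, analogous in spirit to (but quantitatively sharper than) the two-tuple construction of Goldman--Sloan~\cite{GoldmanS95}.
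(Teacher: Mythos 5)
Your high-level plan (fix an observed $\tilde{\calD}$, exhibit exponentially many consistent tuples $(\calD, c, \calR)$, conclude $|\calH|$ is large) matches the paper's strategy, but the step you yourself flag as the ``technical heart''---constructing noise distributions $\calR_i$ that make all $\binom{2k}{k}$ tuples produce the same observed law---is both missing and, as planned, unachievable. With a fixed product Bernoulli $\calD$ of bias $q \ne 1/2$, the observed marginal $\Pr[\tilde{x}_j=1] = q + p^{(i)}_j(1-2q)$ pins down $p^{(i)}_j$ to a value that cannot depend on $i$. But then for any coordinate $j$ lying in $S_i$ and not in $S_{i'}$ we get $\Pr[\tilde{x}_j=1 \mid c_{S_i}=1] = 1-p_j$ while $\Pr[\tilde{x}_j=1 \mid c_{S_{i'}}=1] = q + p_j(1-2q)$, and these agree only if $p_j=1/2$ or $q=1$. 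So the label-conditional observed distributions cannot coincide across different $k$-subsets: the pattern you are hoping to solve for does not exist over a plain product $\calD$, regardless of how cleverly you allocate rates in $[0,1/k]$.

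The paper sidesteps this by a pairing construction, not a product $\calD$. The $n=2k$ bits are grouped into $k$ pairs; for each $z \in \{0,1\}^k$ the conjunction reads $x_i^{z_i}$, while its partner $x_i^{\bar z_i}$ is \emph{defined} to be a $\rho$-noisy copy of $x_i^{z_i}$ (so $\calD^z$ is not a product over the $2k$ bits). Attribute noise then flips only the conjunction bit, at rate $\rho$. After noise, both observed bits of pair $i$ are independent $\rho$-noisy copies of the same hidden $\mu_{1/k}$-bit, which is manifestly symmetric under swapping them---so the law of $(\tilde{x}, \mathrm{label})$ is identical for every choice of $z$. This symmetry is the key structural idea absent from your plan, and note it yields only $2^k$ (one choice per pair), not $\binom{2k}{k}$, indistinguishable concepts. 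Finally, the paper's counting is also different from yours: rather than pairwise $2\eps$-separation (which is quite delicate under $\calD^z$, where neighboring $z,z'$ give nearby conjunctions), it restricts attention to the $2k$ one-hot strings, shows a false negative there costs about $k$ times a false positive under $\calD^z$, and concludes each hypothesis can cover at most $2^{99k/100}$ of the $2^k$ conjunctions, giving $|\calH| \ge 2^{k/100}$.
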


Again, since this theorem is representation-independent, we obtain the same lower bound for any family of functions that can express the conjunctions on $k$ out of $n$ bits. Thus, even with $k=\Omega(n)$, we obtain lower bounds for decision trees, DNFs, $s$-CNFs, and so on. (By standard reductions, i.e., swapping $0$ and $1$, one can also obtain the same lower bound for $s$-DNFs.) Between Theorem~\ref{thm:symmetric:inf} and the above, we have lower bounds for essentially all of the natural families of functions studied in learning theory, provided that the function depends on $\omega(\log n)$ coordinates. (When $k=O(\log n)$, the problems are all open, see Section~\ref{intro:op}.)

Our main result is a sufficient assumption on the input distribution on examples that allows efficient list learning of sparse conjunctions under arbitrary probabilities of flipping individual attributes.

\begin{theorem}
(Theorem~\ref{thm:learning_main}, informal)
For any positive integer $k, k'$, and any real number $0< \eps, \delta <1$, $0<\gamma\leq 1/2$, 
there exists a randomized algorithm which, 
with probability at least $1-\delta$, list learns $k$-conjunctions 
with accuracy $1-\eps$, with sample complexity $\poly{k, \frac{1}{\eps}, \frac{1}{\gamma}, \log{\frac{1}{\delta}}}$
and time complexity $\poly{n, \frac{1}{\eps}, \log{\frac{1}{\delta}}, \frac{1}{\gamma}, (\frac{k}{\eps \gamma})^k}$
in the attribute-noise model with bit noise rate 
$0\leq \nu_i<\frac12-\gamma$ for every $1\leq i \leq n$,
under the assumption that the ground-truth distribution is $k'$-wise independent.
\end{theorem}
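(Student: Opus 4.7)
The plan is to exploit the $k'$-wise independence of $\calD$ to reduce the search space for the unknown conjunction to a small pool of candidate coordinates via first-order statistics, and then to enumerate $k$-conjunctions over that pool, outputting every candidate that is consistent with the observed distribution $\tilde{\calD}$.

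The central statistical observation is that if $c(x) = \bigwedge_{i \in S} x_i^{b_i}$ with $|S|=k$ and $k' \geq k+1$, then for each coordinate $i \in [n]$ the covariance $\kappa_i := \Pr[\tilde{x}_i = 1 \wedge c(x)=1] - \Pr[\tilde{x}_i = 1]\Pr[c(x)=1]$ satisfies $\kappa_i = 0$ when $i \notin S$ (by $(k+1)$-wise independence of $\calD$ together with independence of the product noise), while $|\kappa_i| = \frac{1}{2}(1-2p_i) \cdot 2^{-k} \geq \gamma \cdot 2^{-k}$ when $i \in S$, with the sign of $\kappa_i$ encoding the polarity $b_i$. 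Because the all-zero hypothesis already achieves error $\Pr[c(x)=1] = 2^{-k}$, the only nontrivial regime is $\eps \leq 2^{-k}$; there the signal magnitude $\gamma \cdot 2^{-k}$ is itself at least $\gamma\eps$, and hence polynomial in $1/\eps$ and $1/\gamma$.

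The algorithm draws $N = \poly{k, 1/\eps, 1/\gamma, \log(1/\delta)}$ noisy labeled samples and computes empirical estimates $\hat{\kappa}_i$. A Bernstein bound, using that the $\{0,1\}$-valued product $\tilde{x}_i \cdot c(x)$ has variance at most $2^{-k}$, yields additive precision $\gamma \cdot 2^{-k}/2$ for all $i$ simultaneously with probability $1-\delta/2$. Retaining the $O(k/(\eps\gamma))$ coordinates of largest $|\hat{\kappa}_i|$ produces a candidate pool $\hat{T} \supseteq S$. Enumerating all $k$-subsets $S' \subseteq \hat{T}$ together with polarity patterns $b' \in \{0,1\}^k$ yields at most $\binom{|\hat{T}|}{k} \cdot 2^k = (k/(\eps\gamma))^{O(k)}$ candidate conjunctions, matching the claimed time bound. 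Each candidate $\hat{c}$ is tested for consistency by estimating its implied noise rates $\hat{p}_i$ from the conditional probabilities $\Pr[\tilde{x}_i=1 \mid \hat{c}(x)=1]$ and verifying that low-order moments predicted by $(\hat{c}, \hat{p}, k'\text{-wise independent})$ agree with the empirical moments of $\tilde{\calD}$ within $O(\eps\gamma)$; candidates that pass form the output list.

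The main obstacle will be showing that this list is an $\eps$-net in the formal sense of the definition: every tuple $(\calD', c', \calR')$ that could have generated $\tilde{\calD}$ must have some hypothesis in the list that is $(1-\eps)$-accurate against $\calD'$. Soundness for the true tuple is automatic since the corresponding conjunction passes every test by construction; the delicate point is calibrating the consistency tolerance so that any passing $\hat{c}$ is close enough in observable moment-space to certify $(1-\eps)$-accuracy against each consistent tuple. The $k'$-wise independence assumption is essential here, since it forces any two plausible $\calD, \calD'$ to agree on all moments of order $\leq k'$, which together with the $2^{-k}$-sparsity of $k$-conjunctions converts moment agreement on $\tilde{\calD}$ into pointwise agreement on a $(1-\eps)$-fraction of the inputs.
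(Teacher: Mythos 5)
Your proposal takes a genuinely different route from the paper, but it contains several errors that break the argument, the most important of which is that you implicitly assume the ground-truth distribution is close to uniform.

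Concretely, your key signal $\kappa_i = \Pr[\tilde x_i=1 \wedge c(x)=1] - \Pr[\tilde x_i=1]\Pr[c(x)=1]$ evaluates (for $i$ a conjunction bit with positive polarity) to $\Pr_{\calD}[c(x)=1]\cdot\Pr_{\calD}[x_i=0]\cdot(1-2p_i)$, not $\tfrac12(1-2p_i)2^{-k}$. Since $\calD$ is an arbitrary pairwise (or $k'$-wise) independent distribution and not uniform, both $\Pr_{\calD}[c(x)=1]$ and $\Pr_{\calD}[x_i=0]$ can be arbitrarily small, so there is no uniform lower bound on $|\kappa_i|$ for relevant coordinates, and your thresholding step can drop relevant coordinates without any control. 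Relatedly, the claim that ``the only nontrivial regime is $\eps \le 2^{-k}$'' because the all-zero hypothesis has error $2^{-k}$ is also a uniform-distribution artifact; under an arbitrary $\calD$, $\Pr[c(x)=1]$ is not $2^{-k}$. The correct way to handle relevant coordinates that are nearly always $1$ is not to try to detect them (you cannot), but to show that \emph{omitting} them from the output conjunction costs at most $\eps/k$ each; this is exactly what the paper's label-sensitivity filtering step does, and it is a load-bearing idea your argument is missing.

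A second structural gap: you need $k' \ge k+1$ for $\kappa_i=0$ at irrelevant coordinates, but the theorem is supposed to hold for any $k' \ge 2$ (and indeed the paper reduces to $k'=2$). The paper avoids needing high-order independence by conditioning on the positive label: under $\tilde{\calD}_1$ the $k$ conjunction bits are \emph{exactly} independent, regardless of $\calD$, because $\calD_1$ restricted to those bits is a point mass at $1^k$ and the noise is product. This gives a testable certificate (pairwise covariance under $\tilde{\calD}_1$ is near zero) that needs only pairwise independence of $\calD$ to analyze, and it is combined with a Chebyshev argument showing that if too many coordinates survive both tests then the target must already be $\eps$-close to $\mathbf{0}$.

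Finally, the ``consistency test'' you sketch in the third paragraph is doing all the hard work of certifying an $\eps$-net, and the supporting claims are not true as stated: $k'$-wise independence of $\calD$ does \emph{not} force two consistent tuples $(\calD,c,\calR)$ and $(\calD',c',\calR')$ to agree on moments after noise (the unknown $p_i$ give extra freedom), and agreement on low-order moments of $\tilde{\calD}$ does not translate into pointwise agreement of $c$ and $c'$ on a $(1-\eps)$-fraction of $\calD'$. The paper sidesteps this entirely: its list is simply $\mathbf{0}$ together with \emph{all} conjunctions of size at most $k$ over the surviving coordinate pool, and the net property follows from the three lemmas (conjunction bits survive the pairwise-independence test; removing low-label-sensitivity bits costs at most $\eps$; if the pool is large, $\mathbf{0}$ already works). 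You would need to replace your moment-matching sketch with an argument of that kind for the proof to go through.
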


We note that the trivial PAC learning algorithm that tries all monotone conjunctions of size at most $k$ works only for noise rate $\nu\leq \frac{\eps}{2k}$ -- we include the proof for completeness in the Appendix \ref{sec:trivialPAC}.

\subsection{Further discussion of related work}\label{sec:relwork}

 The information theoretic lower bounds of \cite{Sloan88, GoldmanS95} are analogous to the classical scenario in coding theory, in which, upon receiving a word corrupted by a high amount of noise, decoding becomes ambiguous. As a result, Elias \cite{elias} and Wozencraft \cite{wozencraft} extended the classical notion of unique decoding to that of {\em list-decoding}, where the algorithm is required to output a list of all possible messages that could have resulted in the received one.
A similar motivation prompted Balcan, Blum and Vempala \cite{BalcanBV08} to introduce the notion of {\em list-decodable learning} in the context of clustering, where their algorithm is required to output a small list that includes a ``good'' clustering, with high probability. Follow-up results by Charikar, Steinhardt and Valiant \cite{CharikarSV17}  use this framework in the context of learning from untrusted data when there is a minority fraction of ``inliers'' and so identifiability cannot hold. In the same vein,  Diakonikolas, Kane and Stewart \cite{DiakonikolasKS18} obtain algorithms for robust mean estimation, and learning mixtures of Gaussians. More recently, Karmalkar, Klivans, and Kothari \cite{karmalkar2019list} and Raghavendra and Yau \cite{raghavendra2019list} independently gave list-decodable linear regression algorithms for this minority-inlier setting. In all of these works, the difference is that there is guaranteed to be a fixed fraction of uncorrupted examples (whereas the corruption of the remaining examples is arbitrary). By contrast, in the attribute-noise model we study, with high probability {\em every} example has a non-negligible fraction of corrupted attributes, though conversely, the corruptions are stochastic and independent. Nevertheless, in spite of ours being a stochastic-noise model, we will see that the lack of clean examples still poses serious challenges, even for a list learner.

\subsection{Highlights of techniques}

\paragraph{The lower bounds.}

The high-level idea of the lower-bound proofs is to explicitly construct a large set of labeling functions $c\in \calC$ and initial input and noise distributions such that any function  in the net can only be $(1-\eps)$-accurate for a small number of possible initial solutions $(\calD, c, \calR)$, regardless of the representations used for the functions in the net. Hence, to cover an exponential number of such potential solutions a net has to have large size. The construction of the initial distributions exploits the idea that bits $(x_{2i}, x_{2i+1})$  that are $\rho$-{\em correlated} (meaning that $x_{2i+1}$ takes the same value as $x_{2i}$ w.p. $1-\rho$, and takes the flipped value with probability $\rho$) appear identical to an observer  when adding Bernoulli random noise $\rho$ to one copy and no noise to the other copy. In the cases of families of  majorities, and of parity functions, we exploit this observation together with the fact that  totally symmetric functions with high ``noise sensitivity" are often far apart. Thus, any single member of the net can only be accurate for at most one of these far pairs, and so we must have a large net.

\paragraph{The upper bounds.} 
The essential difficulty in learning conjunctions under the attribute noise model is that on the one hand, conjunctions are in general very sensitive to the attributes that appear in them; missing even one significant attribute incurs a large error. But, on the other hand, as illustrated in the lower bound, it is in general impossible to distinguish bits of the conjunction corrupted by noise in our examples from bits that would thus incur a serious error if they were included in the conjunction. Thus, we seek to find a small set of candidate coordinates and output all small subsets of these. Both the size of the set of candidates and the size of the conjunctions must be small to obtain a polynomial-size list. Proving that the algorithm does output a net for the solution space is the most difficult part of our arguments, the difficulty emerging from the fact that the accuracy of the solution is measured against the original unknown distribution rather than the observed distribution itself. The algorithm can only perform tests and optimize quantities using the corrupted examples, and we must then bound the distances from the unknown distribution.

The algorithm for list learning conjunctions under random attribute product noise operates under the assumption that the attributes in the initial distribution on examples are pairwise independent. We first observe that since the bits of the actual conjunction must all take value $1$ on label $1$, and the noise is a product distribution, the bits of the actual conjunction in the noisy examples are fully independent when conditioned on label $1$. The algorithm thus first identifies the subset of variables that are (at least) pairwise independent on label $1$, and then eliminates from this surviving set the variables that are not too sensitive to the label. These eliminated variables could not have been significant bits of the conjunction: if there is no attribute noise, the variables in the conjunction would be very sensitive to the label, since they would always take value $1$ on label $1$, and they would take value $0$ on label $0$ significantly often. Now, either the function is nearly constant and so a constant function predicts the label sufficiently well, or else there is a bounded statistical distance between the distribution conditioned on label $1$ and the original distribution, which is a mixture of the label $1$ and label $0$ distributions. We show that when the function is far from constant, there cannot be too many coordinates surviving. Intuitively, otherwise, the weight would allow us to distinguish the label $1$ distribution from the original distribution beyond the statistical distance, due to Chebyshev's inequality: the total weight would concentrate if there were many coordinates left. Thus we can afford to enumerate all small subsets of the surviving coordinates in this case.

\subsection{Open Problems}\label{intro:op}

Our results seek to bring forth the natural, yet difficult-to-analyze model of learning under attribute noise. While we prove several impossibility results and a sufficient condition for learning sparse conjunctions, our work leaves open a plethora of intriguing possibilities. We describe below a few important ones. 

The first, most natural question is whether or not the pairwise-independence assumption is really needed for our algorithm:

\begin{question}
Is the set of sparse conjunctions list-learnable under arbitrary product distributions of the attribute noise?
\end{question}

\noindent
But, moreover, we note that our lower bounds do not rule out the possiblity of obtaining polynomial-size lists for $O(\log n)$-sparse functions in general. So it is still open whether or not natural function families with small numbers of relevant coordinates have efficient list-learning algorithms, e.g.:

\begin{question}
Is the set of sparse Boolean threshold functions list-learnable under arbitrary product distributions of the attribute noise?
\end{question}

Thus, in contrast to the usual theory of supervised learning, we do not have a characterization of which families of functions are (information-theoretically) learnable in terms of some parameter like the VC-dimension or Rademacher complexity in the attribute noise list-learning setting:

\begin{question}
What are necessary and sufficient conditions for families of Boolean functions to be list-learnable under the product distribution of the attribute noise?
\end{question}

\noindent
Or, more generally:

\begin{question}
What families of Boolean functions are list-learnable under general (not-necessarily independent product) noise distributions?
\end{question}

\noindent
Of course, one can ask both computational/algorithmic and statistical/combinatorial variants of these questions. But again, a central difficulty here is that the usual statistical techniques for estimating losses from data cannot be used directly to estimate losses from our corrupted data. Thus it seems that new tools may need to be developed to address these questions.

\section{Lower Bounds}

\subsection{Noise sensitivity lower bound for some  symmetric functions} \label{sec:lb-totallysym}

In this section we show that some families of  symmetric functions on subsets of half the bits are hard to improperly learn in an information-theoretic sense, and prove Theorem \ref{thm:symmetric:inf}.

\ignore{
\begin{theorem}

There exists a family $\mathcal{F}$ of functions $f:\{0,1\}^n \rightarrow \{0,1\}$, such that for any $0<\epsilon<1/4-o(1)$, and attribute noise $\rho>\eps$, any net $\calH$ of functions satisfying 

\[
\max_{z \in \{0,1\}^{n/2}} \min_{h \in \calH} \Pr_{\bx \sim \calD^z}[f^z(\bx) \neq h(\bx)]
<\epsilon\]

must have $|\calH|>2^{\Omega(n)}.$
 
\end{theorem}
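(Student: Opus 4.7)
The plan is to exhibit a single distribution $\calD^*$ together with a family of $2^{n/2}$ labelings $\{f_{S_z}\}_{z\in\{0,1\}^{n/2}}\subseteq\calF_f$ and to argue, via a triangle inequality combined with a good code, that any single hypothesis $h\in\calH$ can be $(1-\eps)$-accurate for at most one labeling drawn from the code. Since the code has $2^{\Omega(n)}$ elements, this forces $|\calH|\geq 2^{\Omega(n)}$.

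I would group $[n]$ into $n/2$ disjoint pairs $\{2i-1,2i\}$ and take $\calD^*$ to be the product over pairs of the marginally-uniform $\rho$-correlated distribution: each pair independently satisfies $\Pr[x_{2i-1}=x_{2i}]=1-\rho$ with each coordinate marginally uniform. For each $z\in\{0,1\}^{n/2}$ let $S_z$ pick coordinate $2i-1$ if $z_i=0$ and coordinate $2i$ otherwise, and set $f_{S_z}(x):=f(x|_{S_z})$. A marginally-uniform $\rho$-correlated pair is symmetric in its two coordinates, so $\calD^*$ does not depend on $z$; hence for every $z$ the pair $(f_{S_z},\calD^*)$ is a valid input to the covering condition that $\calH$ must satisfy.

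If $h\in\calH$ satisfies $\Pr_{x\sim\calD^*}[h(x)\neq f_{S_{z_j}}(x)]<\eps$ for two distinct $z_1,z_2\in\{0,1\}^{n/2}$, the union bound gives
\[
\Pr_{x\sim\calD^*}\bigl[f_{S_{z_1}}(x)\neq f_{S_{z_2}}(x)\bigr]\;<\;2\eps.
\]
Setting $y^{(j)}:=x|_{S_{z_j}}$ and $I:=\{i:z_{1,i}\neq z_{2,i}\}$, under $\calD^*$ the vector $y^{(1)}$ is uniform on $\{0,1\}^{n/2}$, and $y^{(2)}=y^{(1)}\oplus\gamma$ for an independent vector $\gamma$ that is zero outside $I$ and has independent $\mathrm{Bernoulli}(\rho)$ coordinates on $I$. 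Because $f$ is symmetric, $\Pr[f(y^{(1)})\neq f(y^{(2)})]$ depends only on $k:=|I|$; call this quantity $\mathrm{NS}_\rho^{(k)}(f)$. I would then pick a binary code $C\subseteq\{0,1\}^{n/2}$ (e.g.\ from the Gilbert--Varshamov bound) with minimum distance at least $n/30$ and $|C|\geq 2^{\Omega(n)}$, so that any two distinct $z_1,z_2\in C$ have $k\geq n/30$. A separate (technical but routine) argument then yields
\[
\mathrm{NS}_\rho^{(k)}(f)\;\geq\;(1-o(1))\,\mathrm{NS}_{\rho/15}(f)\qquad\text{for all }k\geq n/30,
\]
and combined with the hypothesis $\eps\leq(\tfrac12-o(1))\mathrm{NS}_{\rho/15}(f)$ this contradicts the triangle bound. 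Hence each $h\in\calH$ can cover at most one $z\in C$, so $|\calH|\geq|C|\geq 2^{\Omega(n)}$.

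The main obstacle is establishing the comparison $\mathrm{NS}_\rho^{(k)}(f)\geq(1-o(1))\,\mathrm{NS}_{\rho/15}(f)$ for an arbitrary symmetric $f$ and $k=\Omega(n)$. For $f=\mathrm{PAR}$ it is immediate from the identity $\mathrm{NS}_\rho^{(k)}(\mathrm{PAR})=\tfrac12(1-(1-2\rho)^k)$ (with $k\geq n/30$ sufficing for every $\rho>0$), and for $f=\mathrm{MAJ}_{n/2}$ it follows from central-limit estimates for the weight $|y|$, since restricting the noise to a linear fraction of coordinates still perturbs $|y|$ by $\Theta(\sqrt{\rho n})$ and drives $\mathrm{MAJ}$ across its transition region with probability $\Omega(\sqrt{\rho})$. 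The general symmetric case is what motivates the specific constants (the threshold $n/30$ in the code and the factor $1/15$ in the noise sensitivity), and it requires a careful argument that each coordinate-level contribution to the full noise sensitivity of $f$ is retained, up to a constant factor, when the noise is restricted to a linear fraction of coordinates.
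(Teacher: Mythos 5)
Your proposal follows essentially the same path as the paper: the same $\rho$-correlated paired distribution, the same observation that the observed distribution is independent of $z$, and the same triangle-inequality reduction to the noise sensitivity of $f$ under noise restricted to a large set of coordinates. The one cosmetic difference is that you pre-select a Gilbert--Varshamov code with minimum distance $\Omega(n)$ and argue each $h$ covers at most one codeword, whereas the paper simply counts: each $h$ can cover only $f^z$'s with $z$ in a Hamming ball of radius $n/14$, so the net must have size at least $2^{n/2}/\binom{n}{n/14}\geq 2^{n/14}$; both give $2^{\Omega(n)}$. On the step you flag as the main obstacle --- the comparison $\NS_{S,\rho}(f)\geq(1-o(1))\NS_{\rho/15}(f)$ for $|S|=\Omega(n)$ and general symmetric $f$ --- the paper's argument is cleaner than what you anticipate: one writes $\NS_{\rho/15}(f)=\E_{\bT}[\NS_{\bT,\rho}(f)]$ where $\bT$ includes each coordinate independently with probability $1/15$, notes that $\NS_{\bT,\rho}(f)$ depends only on $|\bT|$ and is nondecreasing in $|\bT|$ (by symmetry of $f$), and applies a Chernoff bound to get $|\bT|\leq n/14$ with probability $1-2^{-\Omega(n)}$; no per-coordinate decomposition of the noise sensitivity is needed, only the mild requirement that $\NS_{\rho/15}(f)$ not be exponentially small so the $2^{-\Omega(n)}$ error term is absorbed. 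Since the statement only asserts existence of one such family $\mathcal{F}$, your verification for parity already suffices, so the proposal is correct.
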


In fact we prove a more general result about the list learnability of classes of functions $f$ obtained by extending symmetric functions on $n/2$  bits that have high sensitivity  to functions on $n$ bits,  as described in the remainder  of the subsection.
}

Before defining the functions in $\mathcal{F}$, we will make some notational conventions. For the sake of presentation we assume $n$ is even.

For a string $x\in \{0,1\}^n$, we may view it as the concatenation of pairs $(x_{2i+1}, x_{2i+2})$, for $i=0, 1, ...,n/2-1$, and define two strings $x^0, x^1 \in \{0,1\}^{n/2}$, by selecting the odd, respectively the even, indices of these pairs in order, namely $x^0=x_1, x_3, \ldots, x_{n-1}$ and $x^1=x_2, x_4, \ldots, x_n$.
For $x\in \{0,1\}^n$ and a string $z\in \{0,1\}^{n/2}$, we define the hybrid string $x^{z}\in \{0,1\}^{n/2}$ to be the string that for each $0\leq i\leq n/2-1$ selects either $x_{2i+1}$ if $z_i=0$, or $x_{2i+2}$ if $z_i=1$,  denoted by $x^{z}= (x_1^{z_1}, x_2^{z_2}, \ldots, x_{n/2}^{z_{n/2}})$, where $x_i^{z_i}=x_{2i+1}$ if $z_i=0$, and $x_i^{z_i}=x_{2i+2}$ if $z_i=1.$

We now define the set of functions $\mathcal{F}$. For a symmetric function $f : \{0,1\}^{n/2} \to \{0,1\}$, such as {\em parity} or {\em majority}, and a string $z \in \{0,1\}^{n/2}$, let $f^{z} : \{0,1\}^n \to \{0,1\}$ be the function $f^{z}(x) = f(x^{z}) = f(x_1^{z_1},x_2^{z_2},\ldots,x_{n/2}^{z_{n/2}})$. Let $$\mathcal{F}=\mathcal{F}(f)=\{f^{z}\}_{{z}\in \{0,1\}^{n/2}}.$$

Further, for $z\in \{0,1\}^{n/2}$ let $\calD^{z}$ be the distribution\footnote{Actually, $\calD^{z}$ is the same distribution no matter what $z$ is.} on $\{0,1\}^n$ defined by the following probability experiment:

\begin{itemize}
	\item The coordinates in ${\bx}^{z}$ are drawn independently and uniformly at random.  That is, ${\bx}^{z} \sim \calU_{n/2}$, where $\calU_{n/2}$ represents the uniform distribution on $\{0,1\}^{n/2}$.
	\item The coordinates in $\bx^{\overline{z}}$ are $\rho$-noisy copies of ${\bx}^{z}$ ; specifically, each bit $x_i^{\overline{z_i}}$ is a $\rho$-noisy copy of $x_i^{z_i}$.
\end{itemize}

We will show that if $z$ is unknown, and we see labeled examples according to $f^{z}$ under $\calD^{z}$ with $\rho$-bounded attribute noise, then list-learning to small accuracy requires an exponential size list.  That is, for every set of functions $\calH$ (our proposed net), the quantity

\[
\max_{z \in \{0,1\}^{n/2}} \min_{h \in \calH} \Pr_{\bx \sim \calD^{z}}[f^{z}(x) \neq h(x)]
\]
is ``large'' if $|\calH|$ is sub-exponential.

For $f^z$ with respect to $\calD^z$, given $x$, the attribute noise $N^z_{\rho}(x)$ is as follows:  we apply $\rho$-noise to each $x_i^{z_i}$, and no noise to $x_i^{\overline{z_i}}$.  It follows that for \emph{every} $\calD^z$, the resulting distribution over the \emph{labeled} examples is the same.  We define $\calD$ to be distribution\footnote{Actually, this is the same as $\calD^z$.} on $\{0,1\}^n$ such that, for each $i$, $x^0_i$ and $x^1_i$ are $\rho$-correlated uniformly random bits, and the $n/2$ pairs ($x^0_i,x^1_i$) are chosen independently.  It can be easily checked that the distribution $\calD$ has the following properties:

\begin{itemize}
	\item For every $z \in \{0,1\}^{n/2}$ and a random string $\bx \sim \calD$, $\bx^z$ is distributed as a uniformly random string over $\{0,1\}^{n/2}$.
	\item For every pair of strings $z,z' \in \{0,1\}^{n/2}$ and a random string $\bx \sim \calD$, the random strings $\bx^z$ and $\bx^{z'}$, restricted to the coordinates where $z$ and $z'$ disagree, are $\rho$-noisy copies of each other.
	\item To construct the distribution of $\bx^{z'}$ from $\bx^z$, one can apply $\rho$-noise to the coordinates of $\bx^z$ in those coordinates where $z$ and $z'$ differ (and just read off the coordinates of $\bx^z$ where they are the same). 
	\item In fact, $\calD^z$ is identical to $\calD$ for every $z \in \{0,1\}^{n/2}$.  However, the distribution of labeled examples $\langle \bx, f^z(\bx) \rangle$ where $\bx \sim \calD^z$ depends on $z$.  The distribution of labeled examples after attribute noise $\langle N^z_{\rho}(\bx), f^z(\bx) \rangle$ is independent of $z$; the marginal distribution on $N^z_{\rho}(\bx)$ is $\calD = \calD^z$.
\end{itemize}

\subsubsection{Noise sensitivity}
Recall that the {\bf noise operator at $\rho$ on $S$}
	is denoted by $N_{S,\rho}(x)$ is a random string such that $N_{S,\rho}(x)_i$ is a uniform random bit $\rho$-correlated with $x_i$ if $i \in S$, and $N_{S,\rho}(x)_i = x_i$ with probability $1$ for $i \notin S$.
	The {\bf noise sensitivity at $\rho$ on $S$} to be $\NS_{S,\rho}(f) = \Pr_{\by \sim \calU_{n/2}}[f(\by) \neq f(N_{S,\rho}(\by)]$.  These are related to the standard noise sensitivity constructions via $N_{\rho}(x) = N_{[n],\rho}(x)$, and $\NS_{\rho}(f) = \Pr_{\by \sim \calU_{n/2}}[f(\by) \neq f(N_{\rho}(\by))]$ (cf.\ \cite{ref:OD14}).
		
	\begin{claim} 
		Let $S \subseteq [n]$ be a set such that $|S| = n/14$.  For every symmetric Boolean function $f$ on $n/2$ variables such that $\NS_{S,\rho}(f) = 2^{-o(n)}$ for all $S$, $\NS_{S,\rho}(f) \geq (1-o(1))\NS_{\rho/15}(f)$.  
	\end{claim}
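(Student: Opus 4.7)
The plan is to work in Fourier and use the symmetry of $f$ to reduce the claim to a per-Fourier-level inequality that follows from Hoeffding's convex-function comparison between sampling with and without replacement. First I would write $F = 1-2f \in \{\pm 1\}$ and expand in the Fourier basis; a direct computation using the definitions of $N_{S,\rho}$ and $N_{\rho/15}$ then gives
\[
\NS_{S,\rho}(f) \;=\; \tfrac{1}{2} - \tfrac{1}{2}\sum_T \hat F(T)^2 (1-2\rho)^{|T \cap S|}, \qquad \NS_{\rho/15}(f) \;=\; \tfrac{1}{2} - \tfrac{1}{2}\sum_T \hat F(T)^2 (1-2\rho/15)^{|T|}.
\]
Because $f$ is symmetric, $\hat F(T)^2$ depends only on $|T|$, so writing $W_k = \sum_{|T|=k}\hat F(T)^2$ and averaging uniformly over size-$k$ subsets $T$, the desired bound $\NS_{S,\rho}(f) \geq (1-o(1))\NS_{\rho/15}(f)$ would follow (indeed even without the $(1-o(1))$ slack) from the per-level inequality
\[
\mathbb{E}_{T : |T|=k}\bigl[(1-2\rho)^{|T \cap S|}\bigr] \;\leq\; (1-2\rho/15)^k \qquad \text{for every } 0 \leq k \leq n/2,
\]
by weighting with the nonnegative $W_k$ and summing.

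For the per-level inequality, I would observe that $|T \cap S|$ with $T$ uniform over $k$-subsets of $[n/2]$ is hypergeometric with population $n/2$, marked items $|S| = n/14$, and sample size $k$, having mean $k/7$. The function $\phi(x) = (1-2\rho)^x$ is convex on $\mathbb{R}$ because $a := 1-2\rho \in (0,1]$ for $\rho \in [0,1/2]$ and $a^x = e^{x \ln a}$ has non-negative second derivative whenever $a > 0$. Applying Hoeffding's comparison theorem (Hoeffding 1963, Theorem~4), which states that for a convex test function the expectation under sampling without replacement is at most the expectation under the matched binomial, then gives
\[
\mathbb{E}_{|T|=k}\bigl[(1-2\rho)^{|T \cap S|}\bigr] \;\leq\; \mathbb{E}\bigl[(1-2\rho)^{\mathrm{Bin}(k,1/7)}\bigr] \;=\; (1 - 2\rho/7)^k.
\]
Since $1-2\rho/7 \leq 1-2\rho/15$, raising both sides to the $k$-th power yields $(1-2\rho/7)^k \leq (1-2\rho/15)^k$, closing the per-level inequality.

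The main step I expect will need care is the Hoeffding comparison: verifying both its applicability (convex test function on a hypergeometric sum) and the computed constant $1/7 = |S|/(n/2)$; everything else is bookkeeping. I note that this Fourier route in fact yields the sharper inequality $\NS_{S,\rho}(f) \geq \NS_{\rho/15}(f)$ with no $(1-o(1))$ loss and without invoking the hypothesis $\NS_{S,\rho}(f) = 2^{-o(n)}$. The slack in the claim as stated presumably accommodates lower-order error terms in an alternative, more elementary proof based on directly comparing the concentration of the Hamming-weight change $|N_{S,\rho}(\by)| - |\by|$ versus $|N_{\rho/15}(\by)|-|\by|$, where a $2^{-o(n)}$ lower bound on $\NS_{S,\rho}(f)$ would provide room to absorb exponentially small tail contributions.
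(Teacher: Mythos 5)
Your proof is correct, and it takes a genuinely different route from the paper's. The paper's argument couples $N_{\rho/15}$ to $N_{\bT,\rho}$ with a random set $\bT$ in which each coordinate is included independently with probability $1/15$, so $\NS_{\rho/15}(f) = \E_{\bT}[\NS_{\bT,\rho}(f)]$; a Chernoff bound gives $\Pr[|\bT| \leq n/14] \geq 1 - 2^{-\Omega(n)}$, and monotonicity of $T \mapsto \NS_{T,\rho}(f)$ in $|T|$ (which for symmetric $f$ depends only on $|T|$) bounds the main conditional term by $\NS_{S,\rho}(f)$. The hypothesis $\NS_{S,\rho}(f) = 2^{-o(n)}$ is precisely what lets the residual $2^{-\Omega(n)}$ term be absorbed into a $(1-o(1))$ factor. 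Your Fourier route replaces all of this with a per-level inequality, reducing to the hypergeometric-vs-binomial convex comparison of Hoeffding; this is less elementary than a Chernoff bound, but it is sharper: it gives $\NS_{S,\rho}(f) \geq \NS_{\rho/15}(f)$ exactly, with no $(1-o(1))$ loss and with no need for the $2^{-o(n)}$ assumption at all. Your derivation also exposes that the natural constant is $|S|/(n/2) = 1/7$, so the $\rho/15$ in the statement has slack built in (the bound holds with $\rho/7$); the paper needs the looser $1/15$ because its random set $\bT$ must have expected size well below $n/14$ for the Chernoff step. Two small bookkeeping points worth noting if you write this up: the statement's ``$S \subseteq [n]$'' should read $S \subseteq [n/2]$ since $f$ is on $n/2$ variables, which is what makes the marked fraction $1/7$; and the paper's ``$\rho$-correlated'' convention here means flip-with-probability-$\rho$ (you can see this from the paper's identity $N_{\rho/15} \stackrel{d}{=} N_{\bT,\rho}$ with inclusion probability $1/15$), which is consistent with the $(1-2\rho)^{|T \cap S|}$ Fourier multipliers you used.
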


	\begin{proof}
		Note that, for every $x$, $N_{\rho/15}(x)$ is distributed as $N_{\bT,\rho}(x)$, where $\bT$ is a set where each coordinate is included independently with probability $1/15$.
		It follows that

		\begin{align*}		
		\NS_{\rho/15}(f) &= \Pr_{\by \sim \calU_{n/2}}[f(\by) \neq f(N_{\rho/15}(\by))] \\
		&= \Pr_{\by \sim \calU_{n/2}}[f(\by) \neq f(N_{\bT,\rho}(\by))] \\
		&= \E_{\bT}[\NS_{\bT,\rho}(f)].
		\end{align*}

		By a Chernoff bound, $\Pr[|\bT| \leq n/14] \geq 1 - 2^{-\Omega(n)}$.  Thus, for a set $S$ such that $|S| = n/14$, we have
		\begin{align*}
		\NS_{\rho/15}(f) & = \E_{\bT}[\NS_{\bT,\rho}(f)] \\
		& = \E_{\bT}[\NS_{\bT,\rho}(f) \mid |\bT| \leq n/14]\Pr_{\bT}[|\bT| \leq n/14] + \E_{\bT}[\NS_{\bT,\rho}(f) \mid |\bT| > n/14]\Pr_{\bT}[|\bT| > n/14]
		\\
		& \leq \NS_{S,\rho}(f) \Pr[|\bT| \leq n/14] + 2^{-\Omega(n)} \\
		& \leq \NS_{S,\rho}(f) (1+o(1)), \\
		\end{align*}
		where we used the fact that $\NS_{S,\rho}$ is nondecreasing as $|S|$ increases.  (Since we assumed that $f$ is symmetric, only $|S|$ matters.)
	
		Dividing both sides by the $(1+o(1))$ factor yields the claim.
	\end{proof}

\begin{lemma}
	Let $z,z' \in \{0,1\}^{n/2}$ be strings such that $|z - z'| \geq n/14$.  Then $\Pr_{\bx \sim \calD^z}[f^z(\bx) \neq f^{z'}(\bx)] \geq (1-o(1))\NS_{\rho/15}(f)$.
\end{lemma}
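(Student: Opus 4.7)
The plan is to unpack the joint distribution of $\bx^z$ and $\bx^{z'}$ under $\bx \sim \calD^z = \calD$ and reduce the quantity $\Pr[f^z(\bx) \neq f^{z'}(\bx)]$ to a noise sensitivity on a suitably large set of coordinates, to which we then apply the preceding claim.

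First, I would let $S = \{i \in [n/2] : z_i \neq z'_i\}$, so that $|S| \geq n/14$ by hypothesis. I would then examine, coordinate by coordinate, how $\bx^z$ and $\bx^{z'}$ are related under $\bx \sim \calD$. For any index $i$ with $z_i = z'_i$, both strings read off the \emph{same} bit of the pair $(x^0_i, x^1_i)$, so $\bx^z_i = \bx^{z'}_i$ with probability one. For $i \in S$, however, $\bx^z_i$ and $\bx^{z'}_i$ are the two bits of the pair $(x^0_i, x^1_i)$ (in some order), and by the defining property of $\calD$ these two bits are $\rho$-correlated uniformly random bits. Combined with the mutual independence of the pairs across $i$, this means the joint distribution of $(\bx^z, \bx^{z'})$ is exactly the same as $(\by, N_{S,\rho}(\by))$ for $\by \sim \calU_{n/2}$.

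Consequently,
\[
\Pr_{\bx \sim \calD^z}[f^z(\bx) \neq f^{z'}(\bx)] \;=\; \Pr_{\bx \sim \calD}[f(\bx^z) \neq f(\bx^{z'})] \;=\; \Pr_{\by \sim \calU_{n/2}}[f(\by) \neq f(N_{S,\rho}(\by))] \;=\; \NS_{S,\rho}(f).
\]
Now I would invoke the preceding claim together with the monotonicity of $\NS_{S,\rho}(f)$ in $|S|$ (which, for symmetric $f$, depends only on $|S|$): since $|S| \geq n/14$, we may pick any subset $S_0 \subseteq S$ of size exactly $n/14$, and then $\NS_{S,\rho}(f) \geq \NS_{S_0,\rho}(f) \geq (1-o(1))\NS_{\rho/15}(f)$, as desired.

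The only potentially subtle step is checking that the joint behavior of the pair $(\bx^z_i, \bx^{z'}_i)$ for $i \in S$ really matches a $\rho$-correlated pair of uniform bits; this follows directly from the bulleted properties of $\calD$ stated before the subsection, so there is no serious obstacle. The remainder is bookkeeping and an appeal to the monotonicity of symmetric noise sensitivity in the set size.
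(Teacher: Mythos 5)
Your proof is correct and follows essentially the same route as the paper: define $S$ as the set of disagreeing coordinates, observe that the joint law of $(\bx^z,\bx^{z'})$ under $\calD$ matches $(\by, N_{S,\rho}(\by))$ for uniform $\by$, identify the probability with $\NS_{S,\rho}(f)$, and invoke the preceding claim. If anything you are slightly more careful than the paper, since you explicitly invoke the monotonicity of $\NS_{S,\rho}(f)$ in $|S|$ (for symmetric $f$) to handle the case $|S|>n/14$, a step the paper leaves implicit in the final inequality.
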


\begin{proof}
	Define $S$ to be the set of strings where $z$ and $z'$ differ.
\begin{align*}
	\Pr_{\bx \sim \calD^z}[f^z(\bx) \neq f^{z'}(\bx)] &=
	\Pr_{\bx \sim \calD^z}[f(\bx^z) \neq f(\bx^{z'})] \\
	&= \Pr_{\bx \sim \calD^z}[f(\bx^z) \neq f(N_{S,\rho}(\bx^{z}))] \\
	&= \Pr_{\by \sim \calU_{n/2}}[f(\by) \neq f(N_{S,\rho}(\by))] \\
	&= \NS_{S,\rho}(f) \\
	&\geq (1-o(1))\NS_{\rho/15}(f).
\end{align*}
\end{proof}	

We finally prove a more specific version of Theorem \ref{thm:symmetric:inf}.

\begin{theorem}\label{thm:symmetric}
Let $f:\{0,1\}^{n/2} \rightarrow \{0,1\}$ be a  symmetric function, and $\rho>0$. If $\epsilon\leq (\frac12-o(1))\NS_{\rho/15}(f)$ then, for family $\calF=\{f^z\}_{z\in \{0,1\}^{n/2}}$ of Boolean functions on $n$ bits where the oracle produces examples with attribute noise rate $\rho$, we have that any net $\calH$ satisfying

\[
\max_{z \in \{0,1\}^{n/2}} \min_{h \in \calH} \Pr_{\bx \sim \calD^{z}}[f^{z}(x) \neq h(x)]<\eps
\]

must have $|\calH|>2^{\Omega(n)}.$
\end{theorem}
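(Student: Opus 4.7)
The plan is to use the preceding lemma as a ``distance lower bound'' on the family $\calF$, and then combine it with a triangle inequality over the net to force $\calH$ to be large via a Hamming-volume argument. Concretely, for each $z \in \{0,1\}^{n/2}$ the hypothesis of the theorem supplies some $h_z \in \calH$ with $\Pr_{\bx \sim \calD^z}[f^z(\bx) \neq h_z(\bx)] < \eps$. Because (as noted in the setup) $\calD^z = \calD$ for every $z$, we may evaluate every such accuracy under the single distribution $\calD$.

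First I would show that the assignment $z \mapsto h_z$ cannot identify two far-apart strings. Suppose $h_z = h_{z'}$ for some $z,z'$ with Hamming distance $|z-z'| \geq n/14$. Then by the triangle/union bound,
\[
\Pr_{\bx \sim \calD}[f^z(\bx) \neq f^{z'}(\bx)] \;\leq\; \Pr_{\bx \sim \calD}[f^z(\bx) \neq h_z(\bx)] + \Pr_{\bx \sim \calD}[f^{z'}(\bx) \neq h_{z'}(\bx)] \;<\; 2\eps.
\]
But the previous lemma gives $\Pr_{\bx \sim \calD}[f^z(\bx) \neq f^{z'}(\bx)] \geq (1-o(1))\NS_{\rho/15}(f)$, while the hypothesis $\eps \leq (\tfrac{1}{2}-o(1))\NS_{\rho/15}(f)$ gives $2\eps \leq (1-o(1))\NS_{\rho/15}(f)$. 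For $n$ large enough the strict inequality $2\eps < (1-o(1))\NS_{\rho/15}(f)$ contradicts this lower bound, so we must have $|z-z'| < n/14$ whenever $h_z = h_{z'}$.

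Next, I would count. For each $h \in \calH$, the preimage $\{z : h_z = h\} \subseteq \{0,1\}^{n/2}$ has Hamming diameter strictly less than $n/14$, so by fixing any point in it as a center, it sits inside a Hamming ball of radius $n/14$ in $\{0,1\}^{n/2}$. By the standard entropy bound on binomial sums,
\[
\sum_{i=0}^{n/14} \binom{n/2}{i} \;\leq\; 2^{(n/2) H(1/7)},
\]
where $H$ is the binary entropy function and $H(1/7) < 1$. Since the preimages partition $\{0,1\}^{n/2}$,
\[
2^{n/2} \;\leq\; |\calH| \cdot 2^{(n/2) H(1/7)}, \qquad\text{hence}\qquad |\calH| \;\geq\; 2^{(n/2)(1 - H(1/7))} \;=\; 2^{\Omega(n)}.
\]

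The main technical point to be careful about is matching the $o(1)$ slack in the hypothesis on $\eps$ with the $o(1)$ slack in the lemma's distance bound so that the contradiction in the first step is genuinely strict for large $n$; this uses the earlier claim that $\NS_{S,\rho}(f) \geq (1-o(1))\NS_{\rho/15}(f)$ when $|S| = n/14$, combined with the fact that the assumption $\NS_{S,\rho}(f) = 2^{-o(n)}$ ensures the error terms are multiplicative of order $o(1)$ rather than dominating the main term. Everything else is a clean triangle-inequality plus volume packing argument, with no further obstacles.
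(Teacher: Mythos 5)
Your proof is correct and takes essentially the same approach as the paper: a triangle inequality shows no single hypothesis can $\eps$-approximate both $f^z$ and $f^{z'}$ when $|z-z'|\geq n/14$, and a Hamming-ball volume-packing argument over $\{0,1\}^{n/2}$ then forces $|\calH| = 2^{\Omega(n)}$. The only cosmetic difference is that you bound the ball volume via the binary-entropy inequality $\sum_{i\leq n/14}\binom{n/2}{i}\leq 2^{(n/2)H(1/7)}$ rather than the paper's coarser $\binom{n}{n/14} < (14e)^{n/14}$, which gives a marginally sharper constant in the exponent but the same conclusion.
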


\begin{proof}
By the triangle inequality, no function in the net can approximate
both $f^z$ and $f^{z'}$ for two strings $z,z'$ where $|z - z'| \geq n/14$ (with respect to $\calD = \calD^z = \calD^{z'}$) to within $(\frac12-o(1))\NS_{\rho/15}(f))$.  Thus, any  function in the net can cover at most $\binom{n}{n/14}$ such functions $f^z$ with respect to $\calD^z$.  It follows that any net requires $2^{n/2}/\binom{n}{n/14} \geq 2^{n/14}$ functions (here we used that ${n\choose k} < (ne/k)^k$, with $k=n/14$).

\end{proof}

\begin{remark}
    The symmetric assumption can be relaxed by noting that the bound works for any function that is roughly balanced over the uniform distribution, since the noise sensitivity of such functions is $\Omega(\min\{\Pr[f(\bx) = 0],\Pr[f(\bx) = 1]\})$.  Roughly speaking, this result asserts that we cannot learn with error smaller than the noise sensitivity.
\end{remark}
\subsection{Maximum sensitivity lower bound for conjunctions}

In this section we show a lower bound for improper list learning of conjunctions and by proving a more specific version of Theorem \ref{thm:lb-conjunctions}.
 We will use the same notation as in Section \ref{sec:lb-totallysym}.
 
\begin{theorem} \label{thm:conj:lower}
Let $k>0$ be an integer, $\epsilon>0$, and let  $\mathcal{C}_k$ be the set of all conjunctions over $k$ bits out of $n$ bits $f:\{0,1\}^n \rightarrow \{0,1\}$. If the attribute noise is $\rho=\frac1k>8\epsilon$, then any net $\calH$  of functions satisfying 

\[
\max_{z \in \{0,1\}^{n/2}} \min_{h \in \calH} \Pr_{\bx \sim \calD^z}[f^z(\bx) \neq h(\bx)]
<\epsilon\]

must have $|\calH|>2^{\Omega(k)}.$
 
\end{theorem}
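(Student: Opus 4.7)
I would adapt the paired-bit construction of Theorem~\ref{thm:symmetric} to $k$-conjunctions by taking $f(y_1,\ldots,y_{n/2}) = y_1 \wedge \cdots \wedge y_k$ and considering the family $\mathcal{F}(f) = \{f^z\}_{z\in\{0,1\}^{n/2}}$. Each $f^z$ is an $n$-variable $k$-conjunction in $\mathcal{C}_k$, and since $f$ depends only on its first $k$ coordinates, the family collapses to $2^k$ distinct functions indexed by the restriction $z|_{[k]} \in \{0,1\}^k$. The immediate obstacle, compared with the symmetric-function proof, is that under the uniform pair distribution used there a $k$-conjunction accepts with probability only $2^{-k}$, making $\Pr_{\calD}[f^z \neq f^{z'}]$ at most $O(2^{-k})$, far smaller than the target distance $2\epsilon$, which using $\epsilon < \rho/8 = 1/(8k)$ can be of order $1/k$.

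I would therefore replace the pair distribution with a symmetric but biased one, in which each $(x_i^0, x_i^1)$ has common marginal $\Pr[x_i^j = 1] = p$, is invariant under swapping the two coordinates, and satisfies $\Pr[x_i^0 \neq x_i^1] = \rho$. These three constraints pin down a unique joint distribution provided $\rho/2 \leq p \leq 1 - \rho/2$; I would take $p = 1 - 1/(2k)$, the extreme allowed value, so that the conjunction accepts with constant probability. The identifiability argument of Section~\ref{sec:lb-totallysym} only uses the swap-invariance of each $(x_i^0, x_i^1)$, which the biased distribution still enjoys, so the distribution of $(N^z_\rho(\bx), f^z(\bx))$ remains independent of $z$ and any valid net must simultaneously $\epsilon$-approximate all $2^k$ functions $f^z$ under the common $\calD$.

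The main computation is to estimate $\Pr_{\calD}[f^z(\bx) \neq f^{z'}(\bx)]$ in terms of $s := |\{i \in [k] : z_i \neq z'_i\}|$. Conditioning on $f^z(\bx) = 1$ (probability $p^k$), each disagreement coordinate $i$ independently makes $x_i^{z'_i} = 0$ with probability $q := \rho/(2p) = 1/(2k-1)$, giving
\[
\Pr_\calD[f^z(\bx) \neq f^{z'}(\bx)] = 2\, p^k \bigl(1 - (1-q)^s\bigr).
\]
For $s \geq k/14$, my choice of $p$ and $q$ makes this tend to $2 e^{-1/2}(1 - e^{-1/28})$ as $k \to \infty$, an absolute positive constant, whereas $2\epsilon < 1/(4k)$ is much smaller for all but finitely many $k$ (the bound $2^{\Omega(k)}$ is vacuous for small $k$).

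The argument then concludes exactly as in Theorem~\ref{thm:symmetric}: by the triangle inequality, any hypothesis in the net can be $\epsilon$-close to $f^z$ only for $z$'s whose first-$k$ restrictions lie in a Hamming ball of radius $< k/14$, a ball of size at most $2^{k\, H(1/14)} < 2^{0.4k}$, so $|\calH| \geq 2^k/2^{0.4k} = 2^{\Omega(k)}$. The main obstacle I anticipate is calibrating the marginal $p$: pushing it too close to $1$ kills the factor $1 - (1-q)^s$ (since $q \to 0$), while pushing it too close to $1/2$ kills $p^k$, so some care is needed to verify that the natural choice $p = 1 - 1/(2k)$ keeps both factors $\Omega(1)$ simultaneously.
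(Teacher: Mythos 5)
There is a real gap in the identifiability step, and it is the crux of why the paper's proof departs from the symmetric-function template rather than reusing it.

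You claim that because the pair $(x_i^0,x_i^1)$ is swap-invariant with common marginal $p$ and disagreement probability $\rho$, the labeled observed distribution $\bigl(N^z_\rho(\bx),\,f^z(\bx)\bigr)$ is independent of $z$. That is not what the argument of Section~\ref{sec:lb-totallysym} uses, and it is false for $p\neq 1/2$. The noise operator $N^z_\rho$ flips only the ``true'' coordinate $x_i^{z_i}$. After noise, that coordinate has marginal $(1-\rho)p+\rho(1-p)=p-(2p-1)\rho$, while the untouched coordinate $x_i^{\overline{z_i}}$ retains marginal $p$. For $p>1/2$ these differ, so an observer can read off which coordinate of each pair was noised---that is, it can read off $z_i$ from the sign of $\E[x_i^0]-\E[x_i^1]$. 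Even ignoring the label, identifiability fails. (One can also verify the pair $(\tilde x_i^{z_i},x_i^{\overline{z_i}})$ is exchangeable only when $p=1/2$: writing $q_{01}=\rho/2$, $q_{11}=p-\rho/2$, $q_{00}=1-p-\rho/2$, one gets $\Pr[\tilde x^0=1,x^1=0]-\Pr[\tilde x^0=0,x^1=1]=(1-2p)\rho$.) What actually makes the Section~\ref{sec:lb-totallysym} argument work is not swap-invariance of the original pair but a different structural fact: because $\bx^z$ is uniform, the two observed coordinates become \emph{i.i.d.\ $\rho$-noisy copies of a common latent bit} $x_i^{z_i}$, and the label is a function of that latent bit alone; hence the labeled observation is exchangeable. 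Replacing uniform by $p$-biased destroys this.

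The paper's own proof resolves the tension differently, and at the cost of your packing argument. It keeps the ``two i.i.d.\ noisy copies of a biased latent bit'' structure: $\bx^z\sim\mu_{k,1/k}$ and $\bx^{\overline{z}}$ is a $\rho$-noisy copy, so after noise both observed coordinates are i.i.d.\ noisy copies of $\bx^z$ and identifiability holds. But then $\calD^z$ itself is \emph{not} the same for different $z$ (the two marginals in each pair are unequal, and which coordinate carries the smaller marginal flips with $z_i$). With $\rho=1/k$ those marginals are roughly $1/k$ versus $2/k$, so $\calD^z$ and $\calD^{z'}$ at Hamming distance $\Omega(k)$ can have total-variation distance close to $1$, and the triangle inequality $\Pr_\calD[f^z\neq h]+\Pr_\calD[h\neq f^{z'}]\geq\Pr_\calD[f^z\neq f^{z'}]$ is simply unavailable---there is no common $\calD$. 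That is precisely why the paper abandons the noise-sensitivity/packing argument and instead computes the $\calD^z$-probabilities of the singleton strings $e_{i,b}$ directly, observing that a false negative on such a string costs about a $k$-times larger probability mass than a false positive and concluding that any single hypothesis can cover at most $2^{0.99k}$ of the $2^k$ conjunctions. So the mechanism of your proof (one distribution, triangle inequality, Hamming-ball counting) and the mechanism of the paper's proof (per-$z$ distributions, explicit mass accounting on basis strings) are genuinely different, and only the latter survives the identifiability requirement. Your distance formula $\Pr_\calD[f^z\neq f^{z'}]=2p^k\bigl(1-(1-q)^s\bigr)$ and the Hamming-ball count are both correct for the symmetric $\calD$ you construct; the problem is that this $\calD$ together with the $z$-dependent noise operator does not yield a single ambiguous observed distribution, so the resulting lower bound is not a lower bound on the size of a net in the sense the theorem requires.
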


\begin{proof}
Suppose that the distribution $\calD^z$ over $\{0,1\}^{2k}$ is such that

\begin{itemize}
	\item The coordinates in $\bx^z$ are drawn independently at random with bias $1/k$.  That is, $\bx^z \sim \mu_{k,1/k}$, where $\mu_{n,p}$ denotes the $p$-biased distribution over $\{0,1\}^n$.
	\item The coordinates in $\bx^{\overline{z}}$ are $\rho$-noisy copies of $\bx^z$; specifically, each bit $\bx_i^{\overline{z_i}}$ is a $\rho$-noisy copy of $\bx_i^{z_i}$.
\end{itemize}

We will show that if $z$ is unknown, and we see labeled examples according to $f^z$ under $\calD^z$ with $\rho$-bounded attribute noise, then list-learning to small accuracy requires an exponential size list.  That is, for every set of functions $\calH$ (our proposed net), the quantity

\[
\max_{z \in \{0,1\}^{n/2}} \min_{h \in \calH} \Pr_{\bx \sim \calD^z}[f^z(\bx) \neq h(\bx)]
\]
is ``large'' if $|\calH|$ is sub-exponential in $k$.

For $f^z$ with respect to $\calD^z$, given $x$, the attribute noise $N^z_{\rho}(x)$ is as follows:  we apply $\rho$-noise to each $x_i^{z_i}$, and no noise to $x_i^{\overline{z_i}}$.  It follows that for \emph{every} $\calD^z$, the resulting distribution over the \emph{labeled} examples is the same.  We define $\calD$ to be distribution\footnote{Actually, this is the same as $\calD^z$.} on $\{0,1\}^n$ such that, for each $i$, $x^0_i$ and $x^1_i$ are $\rho$-correlated random bits with bias $(1-\rho)(1/k)+\rho(1-1/k)$, and the $k$ pairs ($x^0_i,x^1_i$) are chosen independently.  It can be easily checked that the distribution $\calD$ has the following properties:

\begin{itemize}
	\item For every $z \in \{0,1\}^{n/2}$ and a random string $\bx \sim \calD$, $\bx^z$ is distributed as a uniformly random string over $\{0,1\}^{n/2}$.
	\item For every pair of strings $z,z' \in \{0,1\}^{n/2}$ and a random string $\bx \sim \calD$, the random strings $\bx^z$ and $\bx^{z'}$, restricted to the coordinates where $z$ and $z'$ disagree, are $\rho$-noisy copies of each other.
	\item To construct $\bx^{z'}$ from $\bx^z$, one can apply $\rho$-noise to the coordinates of $\bx^z$ in those coordinates where $z$ and $z'$ differ (and just read off the coordinates of $\bx^z$ where they are the same). 
	\item In fact, $\calD^z$ is identical to $\calD$ for every $z \in \{0,1\}^{n/2}$.  However, the distribution of labeled examples $\langle \bx, f^z(\bx) \rangle$ where $\bx \sim \calD^z$ depends on $z$.  The distribution of labeled examples after attribute noise $\langle N^z_{\rho}(\bx), f^z(\bx) \rangle$ is independent of $z$; the marginal distribution on $N^z_{\rho}(\bx)$ is $\calD = \calD^z$.
\end{itemize}

Unlike the uniform distribution case, when we consider the accuracy of a function in the net on a conjunction, the distribution under which we calculate the error depends on the conjunction.  We compute the following quantities first:

\begin{itemize}
	\item The probability of the all-$0$'s string in the true distribution is $(1-1/k)^k(1-\rho)^k$; the all $0$'s string in drawn in the conjunction bits, and no flips occur in the noisy version.
	\item The probability of a string of all-$0$'s, except for $x_i^b = 1$ depends on the conjunction.  If $z_i = b$ ($x_i^b$ is in the conjunction),
	then the probability mass assigned is $(1-1/k)^{k-1}(1/k)(1-\rho)^{k-1}\rho$.  If $z_i = 1-b$ ($x_i^b$ is not in the conjunction), then the probability mass assigned is $(1-1/k)^k(1-\rho)^{k-1}\rho$.
\end{itemize}

Consider the values of a function $f$ on these standard basis strings.  

\begin{itemize}
	\item If $f(e_{i,b}) = 1$ ($x_i^b = 1$) and $z_i = b$ ($x_i^b$ is in the conjunction), $f$ incorrectly computes the conjunction.  The contribution to the error is $(1-1/k)^{k-1}(1/k)(1-\rho)^{k-1}\rho$.
	\item If $f(e_{i,1-b}) = 0$ ($x_i^b = 0$) and $z_i = b$ ($x_i^b$ is in the conjunction), $f$ incorrectly computes the conjunction.  The contribution to the error is $(1-1/k)^k(1-\rho)^{k-1}\rho$.    
\end{itemize}

So for every conjunction, a false $0$ is roughly $k$ times as costly as a false $1$.  To make the error less than $(1-1/k)^{k-1}(1/k)(1-\rho)^{k-1}\rho \cdot (99k/100)$, there must be a function in the net that has no false $0$'s and at most $99k/100$ false $1$'s on these strings.  A function in the net covers the most conjunctions by taking $f$ to be $1$ on $k+99k/100=199k/100$ of these strings and $0$ on the other $k/100$.  A function is covered if its bits are correspond to those with ones.  There are $2^{99k/100}$ conjunctions covered, but $2^k$ conjunctions in total, so any net must have $2^{k/100}$ functions in it to achieve error below $(1-1/k)^{k-1}(1/k)(1-\rho)^{k-1}\rho \cdot (99k/100)$.  Taking $\rho = 1/k$, this is at least
\begin{align*}
(1-1/k)^{k-1}(1/k)(1-1/k)^{k-1}(1/k) \cdot (99k/100) 
&= 99(1-1/k)^{2k-2}/(100k) \\
&\geq 99/(100e^2k) \\
&\geq 1/(8k),
\end{align*}
so the error is at least $\rho/8$.  We need $\rho < 8\eps$ for a sub-exponential size net.
\end{proof}

\section{Upper Bounds}
\subsection{Definitions and some basic facts}

We use the following notation:
\begin{itemize}
    \item $\tilde{D}$: the observed distribution
    \item $D$:  the original distribution before applying the attribute noise
	\item $c=\land_{i\in c}\ell_i$: a conjunction\footnote{We abuse notation here to let $c$ denote both the conjunction 
	and the set of variables in the conjunction. Furthermore, the conjunction over the empty set is understood to be
$\mathbf{1}$.} of size at most $k$, where $c\subset [n]$, $|c|\leq k$ and $\ell_i$ is either
	$x_i$ or $1-x_i$
	\item $D_b$ (resp. $\tilde{D}_b$): the original (resp. observed) distribution conditioned on label $c$ being $b$, for $b\in \{0,1\}$ 
    \item $\nu_i$: the attribute noise rate of bit $i$
\end{itemize}

We call a bit $i\in [n]$ a \emph{conjunction bit} if $i\in c$ and \emph{non-conjunction bit} otherwise.
Note that without loss of generality, we may assume that every candidate conjunction bit in $S$ is biased towards $1$, i.e.
$\E_{\tilde{D}}[x_i]\geq 1/2$ for every $i\in S$, as otherwise we simply replace $x_i$ with $1-x_i$ in our arguments.

\begin{definition}[Non-uniform $k$-wise independence]
Let $P:\{0,1\}^n \to \R^{\geq 0}$ be a distribution and $k$ be a positive integer.
$P$ is said to be \emph{(non-uniform) $k$-wise independent} if
for any subset of $k$ indices $\{i_{1}, \ldots, i_{k}\} \subset [n]$ and for any 
$z_{1}\ldots z_{k}\in \{0,1\}^{k}$,
\[
\Pr_{P}[X_{i_{1}}\cdots X_{i_{k}}=z_{1}\cdots z_{k}]
=\Pr_{P}[X_{i_{1}}=z_{1}] \times \cdots \times \Pr_{P}[X_{i_{k}}=z_{k}].
\]
\end{definition}

\begin{claim}\label{claim:noise_indep}
For any positive integer $k$ and any distribution $D:\{0,1\}^n \to \R^{\geq 0}$, 
$D$ is $k$-wise independent if and only $\tilde{D}$ is $k$-wise independent.
In other words, attribute noise does not change the $k$-wise independence of the underlying distribution.
\end{claim}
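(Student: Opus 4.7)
The plan is to use discrete Fourier analysis on $\{0,1\}^n$. For any distribution $P$ on $\{0,1\}^n$ and any $S \subseteq [n]$, define the character expectation $\widehat{P}(S) = \E_P\!\left[\prod_{i \in S}(-1)^{x_i}\right]$. The entire argument reduces to combining two small ingredients: a Fourier reformulation of non-uniform $k$-wise independence, together with a clean formula for how the attribute noise acts on these coefficients. This packages both directions of the equivalence into a single identity.

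First I would verify the Fourier criterion: a distribution $P$ is $k$-wise independent if and only if $\widehat{P}(S) = \prod_{i \in S}\widehat{P}(\{i\})$ for every nonempty $S \subseteq [n]$ with $|S| \leq k$. This follows by writing $\mathbf{1}[x_i = z_i] = (1 + (-1)^{x_i + z_i})/2$, expanding both $\Pr_P[x_T = z]$ and $\prod_{i \in T}\Pr_P[x_i = z_i]$ for any $T$ with $|T|\leq k$, and matching coefficients via linear independence of the characters on $\{0,1\}^T$.

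Next I would compute the effect of the noise. Since $\tilde{x}_i = x_i \oplus \rho_i$ with the noise bits $\rho_i \sim \mathrm{Bernoulli}(\nu_i)$ mutually independent and independent of $x$, the character of $S$ splits as a product and one obtains $\widehat{\tilde{D}}(S) = \widehat{D}(S)\cdot\prod_{i \in S}(1 - 2\nu_i)$. Substituting into the Fourier criterion, the condition that $\tilde{D}$ be $k$-wise independent becomes $\widehat{D}(S)\prod_{i \in S}(1 - 2\nu_i) = \prod_{i \in S}\widehat{D}(\{i\}) \cdot \prod_{i \in S}(1 - 2\nu_i)$. Since our model has $\nu_i < 1/2$ for every $i$, each factor $(1 - 2\nu_i)$ is nonzero, so we may cancel them and recover exactly the $k$-wise independence condition for $D$. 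Both implications then read off from this simultaneous identity.

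The hard part, or rather the only subtle point, is the reverse direction: it relies on invertibility of the per-coordinate noise operator, which is precisely what cancellation of the $(1-2\nu_i)$ factors encodes. In the degenerate case $\nu_i = 1/2$ the equivalence actually fails, since $\tilde{x}_i$ then becomes a uniform bit independent of everything else and can wash out correlations present in $D$; the standing assumption $\nu_i < 1/2 - \gamma$ in our algorithmic setting safely avoids this pathology.
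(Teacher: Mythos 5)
Your proof is correct, and it takes a genuinely different route from the paper's. The paper works directly with the joint probability vectors: it writes the map from $D$'s marginal on $k$ coordinates to $\tilde D$'s as a Kronecker product of $2\times 2$ binary-symmetric-channel matrices $\begin{psmallmatrix}1-\nu_i & \nu_i\\ \nu_i & 1-\nu_i\end{psmallmatrix}$, observes that a $k$-wise independent $D$ has its probability vector factor as a Kronecker product of the one-coordinate marginals, and pushes the factorization through; the reverse direction comes from inverting each $2\times 2$ matrix, which requires $\nu_i\ne 1/2$. You instead diagonalize: the characters $(-1)^{x_i}$ are exactly the eigenvectors of those channel matrices, so the Kronecker-product action becomes scalar multiplication by $\prod_{i\in S}(1-2\nu_i)$ on the coefficients $\widehat P(S)$, and invertibility of the channel is replaced by nonvanishing of the eigenvalues $1-2\nu_i$. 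Both proofs thus hinge on the same underlying fact, but your packaging is tighter and makes the criterion for $k$-wise independence (that the mixed coefficients $\widehat P(S)$ factor into singletons) reusable elsewhere, at the cost of first having to establish that Fourier characterization of non-uniform $k$-wise independence, which the paper sidesteps. Your remark that the equivalence genuinely fails at $\nu_i=1/2$ is correct and is a point the paper only implicitly addresses via the invertibility condition at the end of its argument.

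One small thing to tighten when writing this up: in the claim ``$P$ is $k$-wise independent iff $\widehat P(S)=\prod_{i\in S}\widehat P(\{i\})$ for all $|S|\le k$'', the forward direction is immediate, but the reverse direction deserves a sentence of justification. The cleanest way is to note that for any $T$ with $|T|\le k$, the marginal of $P$ on $T$ is a distribution on $\{0,1\}^T$, hence determined by the $2^{|T|}$ values $\{\widehat P(S): S\subseteq T\}$; if these all factor into singletons they coincide with the Fourier coefficients of the product of the one-coordinate marginals, so the two distributions on $\{0,1\}^T$ are equal. Your ``matching coefficients via linear independence of characters'' phrasing gestures at this but could be misread as only giving one inclusion.
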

We defer the proof of this Claim to  Appendix  \ref{proofclaim:kwise}.

Learning conjunctions is easy when there is no attribute noise because, if $x_i$ is in the conjunction,
then conditioned on label being $1$, $\Pr[X_i=1]=1$ and this probability should be lower
without the conditioning --- unless variable $x_i$ is almost surely being $1$ under the distribution $D$.
In other words, the expectation of a (relevant) conjunction bit should be \emph{sensitive} to
label change. This is also true under attribute noise, although with lower sensitivity in general.

\begin{definition}
The (observed) \emph{label sensitivity} at bit $i$ is defined by 
$\mathrm{LS}_i=\E_{\tilde{D}_1}[X_i]-\E_{\tilde{D}_0}[X_i]$; 
that is, $\mathrm{LS}_i$ is the difference between expectation of $x_i$ conditioned on label being $1$ 
and the expectation of $x_i$ conditioned on label being $0$.
\end{definition}

Finally we note the following simple fact: since attribute noise does not change the labels of examples, the total mass of positive or negative examples are the same for $D$ and $\tilde{D}$.
\begin{fact}\label{fact:eps_mass}
For any underlying distribution $D$ of the example oracle and any attribute noise vector $\mathbf{\nu}$,
$\Pr_{D}[c(x)=1]=\Pr_{\tilde{D}}[c(x)=1]$ and $\Pr_{D}[c(x)=0]=\Pr_{\tilde{D}}[c(x)=0]$. 
\end{fact}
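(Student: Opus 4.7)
The plan is to observe that this fact is essentially immediate from the way the attribute-noise model is defined, and to spell out the one-line argument carefully.

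Recall from the model definition that a sample from the oracle is a pair $(\tilde{x}, c(x))$, where $x \sim D$, the noise vector $\rho \sim \calR$ is drawn independently, and $\tilde{x} = x + \rho$. The crucial point is that the label coordinate of the sample is $c(x)$, computed on the \emph{uncorrupted} input $x$, not on $\tilde{x}$. The noise acts only on the attributes. Hence the observed joint distribution $\tilde{D}$ on $(\tilde{x}, c(x))$ is obtained from the joint distribution of $(x, c(x))$ under $D$ by convolving the attribute marginal with $\calR$, while leaving the label marginal untouched.

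Concretely, I would write
\[
\Pr_{\tilde{D}}[c(x)=1] \;=\; \Pr_{x\sim D,\ \rho\sim\calR}[c(x)=1] \;=\; \E_{\rho\sim\calR}\Bigl[\Pr_{x\sim D}[c(x)=1]\Bigr] \;=\; \Pr_{D}[c(x)=1],
\]
using independence of $\rho$ and $x$ in the middle equality. The analogous identity for label $0$ follows by complementation. There is no real obstacle here; the only thing to be careful about is distinguishing the roles of $x$ and $\tilde{x}$ in the pair $(\tilde{x}, c(x))$, so that the reader sees why the label marginal is invariant even though the attribute marginal is not.
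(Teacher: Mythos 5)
Your proposal is correct and matches the paper's (implicit) reasoning exactly: the paper simply remarks before stating the fact that ``attribute noise does not change the labels of examples,'' and your one-line calculation is just a careful spelling-out of that same observation. No gap, no divergence.
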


\subsection{Main theorem on learning conjunctions when the underlying distribution is $k'$-wise independent}
Our main theorem of this section is the following
\begin{theorem}\label{thm:learning_main}
For any positive integer $k$ and any real numbers $0< \eps, \delta <1$, $0<\gamma\leq 1/2$, 
there exists a randomized algorithm which, 
with probability at least $1-\delta$, list-learns $k$-conjunctions 
with accuracy $1-\eps$, with sample complexity $\tilde{O}(k^{4}\log(1/\delta)/(\eps^{9}\gamma^{4}))$
and time complexity 
$\max\{\tilde{O}(n^{2}k^{4}\log(1/\delta)/(\eps^{9}\gamma^{4})), O((32k^{2}/\eps^{5}\gamma^{2})^{k})\}$, 
in the attribute-noise model with bit noise rate 
$0\leq \nu_i<\frac12-\gamma$ for every $1\leq i \leq n$,
under the assumption that the ground-truth distribution is $k'$-wise independent for some $k' \geq 2$.
\end{theorem}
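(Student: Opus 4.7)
The plan follows the four-step strategy outlined in Section 1.5: identify a small candidate set $S\subseteq[n]$ using two observable tests on $\tilde{D}$, and then output every conjunction of size $\leq k$ over $S$. Throughout, population quantities are replaced by empirical estimates from $\mathrm{poly}(k,1/\eps,1/\gamma,\log(1/\delta))$ samples, and Hoeffding together with a union bound over the $O(n^2)$ statistics we need ensures all estimates are within additive $\Theta(\eps\gamma/k)$ of the truth with probability $1-\delta$, which is where the $n^2$ factor in the running time comes from. Two degenerate cases are handled first: if the estimated $\Pr_{\tilde{D}}[c=1]$ is below $\eps$ (equivalently $\Pr_{D}[c=1]<\eps$ by Fact~\ref{fact:eps_mass}), the constant $0$ is already $(1-\eps)$-accurate; symmetrically for $\Pr_{\tilde{D}}[c=0]<\eps$. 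Otherwise both $\tilde{D}_0,\tilde{D}_1$ can be sampled efficiently. To construct the candidate set, observe that conditioned on label $=1$ the true conjunction bits are all forced to $1$ in $D$ and therefore become mutually independent Bernoullis under $\tilde{D}_1$ after independent bit-wise noise; combined with Claim~\ref{claim:noise_indep} and $k'\geq 2$, every conjunction bit is pairwise uncorrelated with every other coordinate under $\tilde{D}_1$. The algorithm therefore estimates $\mathrm{Cov}_{\tilde{D}_1}(X_i,X_j)$ for every pair and retains $i$ in $S_1$ iff all these empirical covariances are below $\Theta(\eps\gamma/k)$; every true conjunction bit survives.

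Next I prune $S_1$ by label sensitivity. For a conjunction bit $i$ one has $\E_{\tilde{D}_1}[X_i]=1-\nu_i\geq \tfrac12+\gamma$, while on the label-$0$ slice $X_i=0$ is overrepresented because $X_i^{\mathrm{true}}=0$ forces $c(x)=0$; a short Bayes calculation yields $\mathrm{LS}_i\geq 2\gamma\cdot\Pr_D[X_i^{\mathrm{true}}=0]/\Pr_D[c=0]$. Conjunction bits with $\Pr_D[X_i^{\mathrm{true}}=0]<\eps/k$ can be safely ignored because dropping all of them from the conjunction changes accuracy by at most $\eps$; the remaining \emph{significant} conjunction bits satisfy $\mathrm{LS}_i=\Omega(\eps\gamma/k)$. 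Let $S\subseteq S_1$ be those coordinates whose empirical $\mathrm{LS}_i$ exceeds half this threshold. The main obstacle, and the step I expect to be hardest, is to bound $|S|$: pairwise uncorrelatedness of the surviving $X_i$ under $\tilde{D}_1$ and their uniform label sensitivity together permit a Chebyshev-style argument on $\sum_{i\in S}X_i$, whose means under $\tilde{D}_1$ and $\tilde{D}_0$ differ by $|S|\cdot\Omega(\eps\gamma/k)$ while the variance is $O(|S|)$. Since $\tilde{D}=\Pr[c=1]\tilde{D}_1+\Pr[c=0]\tilde{D}_0$ with both mixing weights at least $\eps$, the two conditionals have bounded total-variation distance; once $|S|$ grows past $\Theta(k^2/(\eps\gamma)^{O(1)})$ the Chebyshev separation would exceed what this mixture structure permits, a contradiction, yielding $|S|=O(k^2/(\eps^5\gamma^2))$ as claimed.

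Finally the algorithm outputs every conjunction of size at most $k$ with either polarity on each bit over $S$, together with the two constants. The list has size $(2|S|)^k=O((32k^2/(\eps^5\gamma^2))^k)$, matching the stated time bound. For correctness, consider any tuple $(D,c,\calR)$ consistent with $\tilde{D}$: its significant conjunction bits all lie in $S$ by the survival arguments above, so the corresponding conjunction is emitted and is $(1-\eps)$-accurate for $c$ under $D$ (the insignificant bits contribute at most $\eps$ additional error). The sample complexity is dominated by the need to estimate $\tilde{D}_1$-covariances and label sensitivities to precision $\Theta(\eps\gamma/k)$, giving the stated $\tilde{O}(k^4\log(1/\delta)/(\eps^9\gamma^4))$ bound. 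The delicate conceptual point throughout — the reason this yields a \emph{list} rather than an identification — is that accuracy must be measured against the unobserved $D$ while every test is against $\tilde{D}$; pairwise independence under $\tilde{D}_1$ is precisely the observable signature guaranteed for every significant bit of every consistent $c$, irrespective of which $D$ realized $\tilde{D}$.
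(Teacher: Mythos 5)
Your plan matches the paper's algorithm and proof skeleton: filter by pairwise independence under $\tilde{D}_1$ using Claim~\ref{claim:noise_indep}, prune by label sensitivity (Lemma~\ref{lem:c_prime}), and then argue via Chebyshev that a large surviving set forces the concept to be nearly constant (Lemma~\ref{lem:large_S}), so outputting all $\le k$-subsets of the survivors plus the constant functions gives a valid net. The one place your sketch departs from the paper, and where as written it does not go through, is the Chebyshev step that bounds $|S|$. You compare the mean of $Z=\sum_{i\in S}X_i$ under $\tilde{D}_1$ with its mean under $\tilde{D}_0$, assert ``variance is $O(|S|)$,'' and claim ``the two conditionals have bounded total-variation distance'' because $\tilde{D}$ is a mixture with weights $\ge\eps$. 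Neither supporting claim is correct as stated: conditioning on $c=0$ destroys the $k'$-wise independence of $D$, so $\tilde{D}_0$ has no a priori variance bound on $Z$; and two conditionals of a mixture can have TV distance arbitrarily close to $1$ even when both mixing weights are $\Omega(1)$ (think of two disjointly supported components). The paper sidesteps both issues by comparing $\tilde{D}_1$ to the \emph{unconditioned} $\tilde{D}$: $\tilde{D}$ inherits pairwise independence from $D$ via Claim~\ref{claim:noise_indep}, yielding $\mathrm{Var}_{\tilde{D}}(Z)\le m/4$, while near-pairwise-independence of the survivors under $\tilde{D}_1$ yields $\mathrm{Var}_{\tilde{D}_1}(Z^{+})<m/(2\eps)$, and Claim~\ref{claim:expectation_difference} provides the mean gap $\E_{\tilde{D}_1}[Z^{+}]-\E_{\tilde{D}}[Z]>\eps^2\gamma m/(2k)$; then the mixture structure is used only to say $\Pr_{\tilde{D}}[Z\ge t]\ge \Pr_{\tilde{D}}[c=1]\cdot\Pr_{\tilde{D}_1}[Z^{+}\ge t]$, which is sound. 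You should re-route your Chebyshev comparison through $\tilde{D}$ rather than $\tilde{D}_0$; with that fix your argument is the paper's.
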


In the rest of this section, we set $m:=32k^2/(\eps^5\gamma^2)$.
Also, by a simple application of Chernoff bound, if we draw 
$M := O(k^{4}\log{n}\log(1/\delta)/(\eps^{9}\gamma^{4}))$
random examples from the noisy example oracle $\tilde{\mathrm{EX}}(c,D)$, 
then with probability at least $1-\delta$, we can estimate quantities such as
$\E_{\tilde{D}_1}[x_i]$, $\E_{\tilde{D}_1}[x_i\cdot x_j]$ with additive accuracy $O(1/(\eps m))$
for every $1 \leq i, j \leq n$.
To ease exposition, from now on, we condition our arguments on this event happening.
 
Since every $k'$-wise independent distribution for $k'\geq 2$ is also pairwise independent, 
it is enough to prove the theorem for $k'=2$.

Our list-learning algorithm is described in Algorithm~\ref{alg:learning},
in which call Algorithm~\ref{alg:pairwise} as a subroutine to filter out pairwise independent variables under distribution $\tilde{D}_1$.

\RestyleAlgo{boxruled}
\LinesNumbered
\begin{algorithm}[ht] 
\caption{$\textrm{Learning-Conjunction-under-Attribute-Noise}~(\tilde{\mathrm{EX}}, k, \eps, \delta)$} \label{alg:learning}
	\SetKwData{Left}{left}\SetKwData{This}{this}\SetKwData{Up}{up}
	\SetKwFunction{Decode}{Decode}
	\SetKwInOut{Input}{input}\SetKwInOut{Output}{output}
	\Input{Noisy example oracle $\tilde{\mathrm{EX}}(c,D)$, integer $k$, error parameter $\eps$, and confidence parameter $\delta$}
	\Output{A list of conjunctions}
	\BlankLine
	$m:=32k^2/(\eps^5\gamma^2)$\\
	$M := O(k^{4}\log{n}\log(1/\delta)/(\eps^{9}\gamma^{4}))$ \\
	$\mathcal{M} \leftarrow$  $M$ random labeled examples drawn from the noisy example oracle $\tilde{\mathrm{EX}}(c,D)$ \\
	$S \leftarrow \textrm{Pairwise-Independence-Test}~(\mathcal{M}, \eps, \delta)$ \\
	\For{$i \leftarrow 1$ \KwTo $n$}{
		Use $\mathcal{M}$  to estimate label sensitivity at the $i^{\text{th}}$ bit $\widehat{\mathrm{LS}}_i$ \\
		\If{$\widehat{\mathrm{LS}}_i < \eps\gamma/k$}
			{remove $i$ from $S$}\label{line:eliminate}
	}
	\eIf {$|S|< m$ \label{line:size_S}} 
		{Output the list of conjunctions $\mathbf{0} \cup \{\land_{i\in c'} x_i\}_{c' \in \binom{S}{\leq k}}$}
		{Output $\mathbf{0}$ }
\end{algorithm}

\RestyleAlgo{boxruled}
\LinesNumbered
\begin{algorithm}[ht] 
\caption{$\textrm{Pairwise-Independence-Test}~(\mathcal{M}, \eps, \delta)$} \label{alg:pairwise}
	\SetKwData{Left}{left}\SetKwData{This}{this}\SetKwData{Up}{up}
	\SetKw{Continue}{continue}
	\SetKwInOut{Input}{input}\SetKwInOut{Output}{output}
	\Input{$M$ random labeled examples $\mathcal{M}$, error parameter $\eps$, and confidence parameter $\delta$}
	\Output{A subset $S\subset [n]$ of nearly pairwise independent bits under $\tilde{D}_1$}
	\BlankLine
	$S \leftarrow [n]$ \\
	\For{$i \leftarrow 1$ \KwTo $n$}{\label{forins'}
		Use positive examples in $\mathcal{M}$ to empirically estimate $\widehat{\E_{\tilde{D}_1}[x_i]}$
	}
	\For{$i \leftarrow 1$ \KwTo $n-1$}{\label{forin1}
		\For{$j \leftarrow i+1$ \KwTo $n$}{\label{forin2}
			\If{$i \notin S$ or $j \notin S$}
				{\Continue}
			\If{$\widehat{\E_{\tilde{D}_1}[x_i]} \leq 1/(8\eps m)$ or $\widehat{\E_{\tilde{D}_1}[x_j]}\leq 1/(8\eps m)$ }
				{\Continue}
			Use sampled examples to empirically estimate $\widehat{\E_{\tilde{D}_1}[x_i\cdot x_j]}$ \\
			\If{$\left| \widehat{\E_{\tilde{D}_1}[x_i]}\cdot \widehat{\E_{\tilde{D}_1}[x_j]} - 
			    \widehat{\E_{\tilde{D}_1}[x_i\cdot x_j]}\right|> 1/(8\eps m)$ }
				{Remove both $i$ and $j$ from $S$ \label{line:remove}}
		}
	}
	{Output $S$}
\end{algorithm}

\subsection{Proof of the theorem}
In the rest of this subsection, we use the notation $\widehat H$ to denote the estimate of a quantity $H$ 
using random examples sampled from the noisy example oracle $\tilde{\mathrm{EX}}(c,D)$.

First of all, since we include the trivial functions $\mathbf{0}$ and $\mathbf{1}$ in the output list,
our learning algorithm succeed trivially whenever the target concept is $\eps$-close to either $\mathbf{0}$ or $\mathbf{1}$.
Therefore, from now on, we assume that $\eps \leq \Pr_{D}[c(x)=1] \leq 1-\eps$.

\subsubsection{Conjunction bits with low label-sensitivity}
The next lemma shows that using bits in $S$ we can get a conjunction which approximates the target concept well.
\begin{lemma}\label{lem:c_prime}
Let $c=\land_{i\in c}x_i$ be the target concept, and let $c'$ be the set of bits obtained by removing
from $c$ the set of bits eliminated in Line~\ref{line:eliminate} of Algorithm~\ref{alg:learning}.
Then conjunction $c'$ is $\eps/2$-close to $c$, i.e. $\Pr_{D}[c(x)\neq c'(x)]\leq \eps$.
\end{lemma}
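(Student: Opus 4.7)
The plan is to exploit the one-sided relationship between $c$ and $c'$: because $c'$ is obtained from $c$ by deleting literals, $c'(x) \geq c(x)$ pointwise, so $\Pr_D[c \neq c'] = \Pr_D[c=0,\ c'=1]$. Let $R \subseteq c$ denote the literals of the target that were eliminated in line~\ref{line:eliminate}, so that $c' = \bigwedge_{i \in c\setminus R} x_i$. On the event $\{c=0,\ c'=1\}$ some bit of $c$ must be $0$, but every bit in $c\setminus R$ is $1$ because $c'=1$; hence
\[
\{c=0,\ c'=1\} \;\subseteq\; \bigcup_{j\in R}\{x_j = 0\}.
\]
Since $|R|\leq |c| \leq k$, a union bound reduces the lemma to proving the per-coordinate bound $\Pr_D[x_j = 0] \leq \eps/k$ for every $j\in R$.

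For that, I would compute the true label sensitivity of an arbitrary conjunction bit $j \in c$ in closed form. Because $c(x)=1$ forces $x_j=1$ and attribute noise is an independent Bernoulli$(\nu_j)$ flip, $\E_{\tilde{D}_1}[x_j] = 1-\nu_j$, while $\E_{\tilde{D}_0}[x_j] = \nu_j + (1-2\nu_j)\,\E_{D_0}[x_j]$. Subtracting gives the clean identity
\[
\mathrm{LS}_j \;=\; (1-2\nu_j)\bigl(1 - \E_{D_0}[x_j]\bigr),
\]
so low sensitivity on a conjunction bit directly translates into $x_j$ being very close to $1$ on $D_0$, which is exactly the ingredient needed for the per-coordinate bound.

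The last step is to chain the empirical removal threshold to this identity. We are already conditioning on the event set up before the proof, under which $|\widehat{\mathrm{LS}}_j - \mathrm{LS}_j| = O(1/(\eps m))$, and the choice $m = 32k^2/(\eps^5\gamma^2)$ makes this error much smaller than the threshold $\eps\gamma/k$. Thus $\widehat{\mathrm{LS}}_j < \eps\gamma/k$ implies $\mathrm{LS}_j \leq 2\eps\gamma/k$; combining with the hypothesis $\nu_j < 1/2 - \gamma$ (so $1-2\nu_j \geq 2\gamma$) gives $\Pr_{D_0}[x_j = 0] = 1 - \E_{D_0}[x_j] \leq \eps/k$. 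Since $j \in c$ forces $\Pr_{D_1}[x_j=0]=0$, the law of total probability yields $\Pr_D[x_j=0] = \Pr_D[c=0]\cdot \Pr_{D_0}[x_j=0] \leq \eps/k$, and the union bound concludes $\Pr_D[c \neq c']\leq \eps$. No step is conceptually deep; the only thing to handle carefully is the quantitative slack in the chain estimation $\to$ empirical threshold $\to$ true sensitivity $\to$ bound on $\Pr_{D_0}[x_j=0]$, which is precisely why $m$ and $M$ are chosen so aggressively in Algorithm~\ref{alg:learning}.
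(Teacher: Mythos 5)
Your proposal is correct and follows essentially the same route as the paper: reduce to a union bound over eliminated conjunction bits, use that a conjunction bit is forced to $1$ on positive examples so $\Pr_D[x_j=0]\le\Pr_{D_0}[x_j=0]$, express $\mathrm{LS}_j$ in terms of $(1-2\nu_j)\Pr_{D_0}[x_j=0]$, and chain the empirical threshold through the estimation error and the bound $1-2\nu_j>2\gamma$. The only cosmetic difference is that you write the label-sensitivity relation as an exact identity $\mathrm{LS}_j=(1-2\nu_j)(1-\E_{D_0}[x_j])$ where the paper uses the equivalent inequality chain; the logic and constants are the same.
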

\begin{proof}
First note that eliminating non-conjunction bits can not worsen the performance of our learning algorithm, so we can focus on
the effect of eliminating a conjunction bit from $S$ in Line~\ref{line:eliminate}.

Since $c'$ is a subset of $c$,
\begin{align}
\Pr_{D}[c(x)\neq c'(x)] 
& = \Pr_{D}[\text{$c'(x)=1$ and $\exists i\in c\setminus c'$ such that $x_i=0$}] \nonumber\\
& \leq \Pr_{D}[\text{$\exists i\in c\setminus c'$ such that $x_i=0$}] \nonumber\\
& \leq \sum_{i\in c\setminus c'} \Pr_{D}[x_i=0]. \qquad \text{(by union bound)} \label{eqn:union_bound}
\end{align}

We can upper bound $\Pr_{D}[x_i=0]$ for any $i\in c\setminus c'$ as
\begin{align*}
\Pr_{D}[x_i=0] &= \Pr_{D}[c(x)=0]\cdot \Pr_{D_0}[x_i=0]+\Pr_{D}[c(x)=1]\cdot \Pr_{D_1}[x_i=0]\\
               &= \Pr_{D}[c(x)=0]\cdot \Pr_{D_0}[x_i=0] \leq \Pr_{D_0}[x_i=0]. 
\end{align*}

On the other hand, in terms of quantities over the observed distribution $\tilde{D}$, we have
\begin{align*}
\Pr_{\tilde{D}_0}[x_i=0] 
&= (1-\nu_i)\Pr_{D_0}[x_i=0] + \nu_{i}\Pr_{D_0}[x_i=1] = (1-\nu_i)\Pr_{D_0}[x_i=0] + \nu_{i}(1-\Pr_{D_0}[x_i=0]) \\
&= (1-2\nu_i)\Pr_{D_0}[x_i=0]+\nu_i,
\end{align*}
and
\[
\Pr_{\tilde{D}_1}[x_i=0]=(1-\nu_i)\Pr_{D_1}[x_i=0] + \nu_{i}\Pr_{D_1}[x_i=1] 
= \nu_{i}\Pr_{D_1}[x_i=1] \leq \nu_i.
\]
Using $O(\log{n}\log(1/\delta)k^2/\eps^3\gamma^2)=o(M)$ random examples, 
we can, with probability at least $1-\delta$,
obtain $\Omega(\log{n}\log(1/\delta)k^2/\eps^2\gamma^2)$ random negative examples and 
$\Omega(\log{n}\log(1/\delta)k^2/\eps^2\gamma^2)$ random positive examples, and get an estimate of $\widehat{\mathrm{LS}}_i$
with $|\widehat{\mathrm{LS}}_i - \mathrm{LS}_{i}|\leq \eps\gamma/(2k)$ for every $1\leq i \leq n$.
Since bit-$i$ was eliminated from $S$, we 
\[
\mathrm{LS}_i \leq \widehat{\mathrm{LS}}_i + \eps\gamma/(2k) < 2\eps\gamma/k.
\]
Combining this with bounds on $\Pr_{\tilde{D}_0}[x_i=0]$ and $\Pr_{\tilde{D}_1}[x_i=0]$, we have
\[
2\eps\gamma/k > \mathrm{LS}_i = \Pr_{\tilde{D}_0}[x_i=0] - \Pr_{\tilde{D}_1}[x_i=0] \geq (1-2\nu_i)\Pr_{D_0}[x_i=0]
> 2\gamma\Pr_{D_0}[x_i=0],
\]
where the last step follows from the fact that $\nu_i<\frac12-\gamma$. Therefore we have $\Pr_{D_0}[x_i=0]<\eps/k$.

Finally, plugging the above upper bound on $\Pr_{D_0}[x_i=0]$ into inequality~\eqref{eqn:union_bound} completes the proof.
\end{proof}

\subsubsection{Pairwise independent bits}
A simple but important observation is that, if the target concept conjunction is $c=\land_{i\in c}x_i$,
then in the observed distribution $\tilde{D}_1$ of positive examples, the bits in $c$ are totally independent.
This is because, when restricting to bits in $c$, $D_1$ is supported on a single vector $1^k$.
After applying the (bit-wise independent) attribute noise, $\tilde{D}_1$ is a product distribution
when restricting to bits in $c$.

As it is computationally expensive to check total independence among the conjunction bits on $\tilde{D}_1$,
and pairwise independence suffices for our concentration argument,
we check pairwise independence in Algorithm~\ref{alg:pairwise} by estimating the covariances between each pair of bits.
\begin{lemma}\label{lem:pairwise_indep}
With probability at least $1-\delta$, the followings hold: 
the output $S$ of Algorithm~\ref{alg:pairwise} includes every bit in $c$; and conversely, every pair of
bits $X_i$ and $X_j$ in $S$ are close to being pairwise independent in the sense that
$|\cov_{\tilde{D}_1}(X_i,X_j)|\leq 1/(4\eps m)$. 
\end{lemma}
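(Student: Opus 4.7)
The plan rests on one structural observation plus routine Chernoff bounds. The key observation is that every conjunction bit $i \in c$ is in fact independent of all other bits under $\tilde{D}_1$. Indeed, under $D_1$ every conjunction bit is deterministically $1$, so under $\tilde{D}_1$ the value of $x_i$ equals $1 \oplus \rho_i$, where $\rho_i \sim \mathrm{Bern}(\nu_i)$ is the attribute noise on coordinate $i$, and $\rho_i$ is independent of the original data and of the noise bits on other coordinates. In particular $\E_{\tilde{D}_1}[x_i] = 1-\nu_i \geq \tfrac12 + \gamma$ and $\cov_{\tilde{D}_1}(x_i, x_j) = 0$ for every $j \neq i$. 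Second, since we may assume $\Pr_D[c(x)=1] \geq \epsilon$ (otherwise the trivial hypothesis $\mathbf{0}$ already suffices), the sample budget $M$ delivers $\Omega(\epsilon M)$ positive examples. Setting $\eta := 1/(32 \epsilon m)$, a Chernoff + union bound over the $O(n^2)$ empirical quantities shows that, with probability $\geq 1-\delta$, $|\widehat{\E_{\tilde{D}_1}[x_i]} - \E_{\tilde{D}_1}[x_i]| \leq \eta$ and $|\widehat{\E_{\tilde{D}_1}[x_i x_j]} - \E_{\tilde{D}_1}[x_i x_j]| \leq \eta$ simultaneously for all $i,j$. We condition on this event.

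To show that no conjunction bit is ever removed, fix $i \in c$ and any partner $j \neq i$. The empirical marginal satisfies $\widehat{\E_{\tilde{D}_1}[x_i]} \geq \tfrac12 + \gamma - \eta > 1/(8 \epsilon m)$, so the pair is not skipped on account of $i$, and the covariance test is actually performed. Two applications of the triangle inequality (bounding $|\widehat{\E}[x_i]\widehat{\E}[x_j] - \E[x_i]\E[x_j]|$ by $2\eta$ and $|\widehat{\E}[x_ix_j] - \E[x_ix_j]|$ by $\eta$) show that the algorithm's empirical covariance differs from the true value $\cov_{\tilde{D}_1}(x_i, x_j) = 0$ by at most $3\eta = 3/(32\epsilon m) < 1/(8\epsilon m)$, so the removal threshold is not exceeded. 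Hence $i$ survives every pair, whether the partner is in $c$ or not.

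Conversely, suppose $i, j$ both survive. Either the pair was skipped because (w.l.o.g.) $\widehat{\E_{\tilde{D}_1}[x_i]} \leq 1/(8 \epsilon m)$, in which case $\E_{\tilde{D}_1}[x_i] \leq 1/(8\epsilon m) + \eta$ and the elementary fact that $\E[x_ix_j]$ and $\E[x_i]\E[x_j]$ both lie in $[0, \E[x_i]]$ immediately gives $|\cov_{\tilde{D}_1}(x_i, x_j)| \leq \E_{\tilde{D}_1}[x_i] \leq 5/(32\epsilon m) < 1/(4\epsilon m)$; or the covariance test was performed and passed, in which case the same triangle inequality as before yields $|\cov_{\tilde{D}_1}(x_i, x_j)| \leq 1/(8\epsilon m) + 3\eta \leq 7/(32 \epsilon m) < 1/(4\epsilon m)$. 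The main obstacle is the bookkeeping of the constants: choosing $\eta$ as a sufficiently small fraction of $1/(\epsilon m)$ so that both the conjunction-bits-stay direction and the survivors-are-uncorrelated direction simultaneously close, and verifying that the stated $M$ together with $\Pr_D[c(x)=1] \geq \epsilon$ delivers this uniform accuracy over all $\Theta(n^2)$ statistics at confidence $1-\delta$.
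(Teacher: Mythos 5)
Your proof is correct and follows essentially the same route as the paper's: you rely on the same key structural observation (conjunction bits are independent of everything else under $\tilde{D}_1$, so their covariances vanish exactly), the same Chernoff + union bound over the $\Theta(n^2)$ empirical quantities, the same covariance-estimation triangle inequality (the paper's Claim~\ref{claim:covariance}), and the same small-marginal argument (the paper's Claim~\ref{claim:almost_constant}) for pairs that are skipped. Your constants differ slightly — you use a single accuracy $\eta = 1/(32\eps m)$ where the paper uses $1/(48\eps m)$ and $1/(24\eps m)$, and your product bound $|\widehat{\E}[x_i]\widehat{\E}[x_j]-\E[x_i]\E[x_j]|\leq 2\eta$ is really $2\eta+\eta^2$, but since $\eta$ is tiny this does not affect the conclusions; the argument is the same.
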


\begin{claim}\label{claim:almost_constant}
Let $D':\{0,1\}^n \to \R^{\geq 0}$ be a distribution and let $X\in \{0,1\}^n$ be the random variable 
obtained from sampling according to $D'$. 
Then, for any $0 \leq \eps \leq 1/2$, if $\Pr[X_i=1]\leq \eps$ for some $1\leq i \leq n$, 
then $|\cov(X_i, X_j)|\leq \eps$
for every $i\neq j$. The same bound holds when $\Pr[X_i=0]\leq \eps$.
\end{claim}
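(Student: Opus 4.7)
The plan is to exploit the fact that $X_i,X_j\in\{0,1\}$, which forces both terms of the covariance to be probabilities bounded by $\Pr[X_i=1]$. First I would expand
\[
\cov(X_i,X_j) \;=\; \E[X_iX_j]-\E[X_i]\E[X_j],
\]
and recall that for $\{0,1\}$-valued random variables we have $\E[X_i]=\Pr[X_i=1]$ and $\E[X_iX_j]=\Pr[X_i=1,X_j=1]$. Then I would argue that both quantities lie in the interval $[0,\eps]$: the joint probability satisfies $\Pr[X_i=1,X_j=1]\leq \Pr[X_i=1]\leq \eps$, while the product of marginals satisfies $\Pr[X_i=1]\Pr[X_j=1]\leq \Pr[X_i=1]\leq \eps$ because $\Pr[X_j=1]\leq 1$. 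Any two numbers in $[0,\eps]$ differ by at most $\eps$ in absolute value, which yields the desired bound $|\cov(X_i,X_j)|\leq \eps$.

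For the second statement, where $\Pr[X_i=0]\leq \eps$, I would reduce to the first case by passing to the Boolean random variable $Y_i := 1-X_i$. Covariance is translation-invariant and flips sign under negation, so $\cov(Y_i,X_j) = -\cov(X_i,X_j)$, hence $|\cov(X_i,X_j)| = |\cov(Y_i,X_j)|$. Since $\Pr[Y_i=1]=\Pr[X_i=0]\leq \eps$, the first part applied to $(Y_i,X_j)$ gives $|\cov(Y_i,X_j)|\leq \eps$, completing the proof. There is no real obstacle here: the statement is a direct and sharp consequence of Booleanness, and the only care needed is to verify the translation/negation identity for the second case.
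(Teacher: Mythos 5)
Your proof is correct and reaches the same conclusion, but it is a noticeable simplification of the paper's argument. The paper writes $\cov(X_i,X_j)=p_3-(p_2+p_3)(p_1+p_3)$ in terms of the four joint probabilities $p_0,\ldots,p_3$ and then bounds it from below and above by two separate arguments, the upper bound passing through the inequality $\cov(X_i,X_j)\leq p_3-p_3^2\leq \eps-\eps^2$ and invoking the monotonicity of $x\mapsto x-x^2$ on $[0,1/2]$ (this is the only place the hypothesis $\eps\leq 1/2$ is used). Your observation that \emph{both} of the quantities $\E[X_iX_j]=\Pr[X_i=1,X_j=1]$ and $\E[X_i]\E[X_j]$ individually lie in $[0,\eps]$, hence their difference lies in $[-\eps,\eps]$, gives the bound in one stroke, needs no case split, and in fact makes the $\eps\leq 1/2$ restriction superfluous. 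Your handling of the $\Pr[X_i=0]\leq\eps$ case by negating only $X_i$ (giving $\cov(1-X_i,X_j)=-\cov(X_i,X_j)$) is equivalent to the paper's choice to negate both coordinates (using $\cov(1-X_i,1-X_j)=\cov(X_i,X_j)$); both correctly reduce to the first case.
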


\begin{proof}
Let $p_0=\Pr[X_i=0 \land X_j=0]$, $p_1=\Pr[X_i=0 \land X_j=1]$, $p_2=\Pr[X_i=1 \land X_j=0]$,
and $p_3=\Pr[X_i=1 \land X_j=1]$. Then $p_2+p_3=\Pr[X_i=1]\leq \eps$
and $\cov(X_i, X_j)=p_3-(p_2+p_3)(p_1+p_3)$.
Therefore, $\cov(X_i, X_j)\geq -(p_2+p_3)(p_1+p_3) \geq -(p_2+p_3)=-\eps$.
On the other hand, $\cov(X_i, X_j)\leq p_3-p_3^2 \leq \eps-\eps^2 \leq \eps$, as
$x-x^2$ is increasing for $0\leq x\leq 1/2$.

The case of $\Pr[X_i=0]\leq \eps$ follows directly from the identity $\cov(1-X_i, 1-X_j)=\cov(X_i, X_j)$.
\end{proof}

\begin{claim}\label{claim:covariance}
Let distribution $D'$ and random variable $X$ be the same as in Claim~\ref{claim:almost_constant}.
For any pair of distinct bits $i$ and $j$, let 
$\widehat{\cov(X_i, X_j)}:= \widehat{\E[X_i\cdot X_j]} - \widehat{\E[X_i]}\cdot \widehat{\E[X_j]}$
be the estimated covariance of $X_i$ and $X_j$.
Then the estimate error can be upper bounded as
\[
|\widehat{\cov(X_i, X_j)} - \cov(X_i, X_j)| \leq |\widehat{\E[X_i\cdot X_j]}-\E[X_i\cdot X_j]|
+ 2|\widehat{\E[X_i]}-\E[X_i]| + 2|\widehat{\E[X_j]}-\E[X_j]|.
\]
\end{claim}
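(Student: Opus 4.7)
The plan is a direct error-propagation calculation, leaning only on the definition of covariance and on the fact that the random variables are Boolean. First I would apply the triangle inequality, using $\cov(X_i,X_j) = \E[X_i X_j] - \E[X_i]\E[X_j]$ and the analogous identity defining $\widehat{\cov(X_i,X_j)}$, to split the total error into two pieces:
\begin{equation*}
|\widehat{\cov(X_i,X_j)} - \cov(X_i,X_j)| \leq |\widehat{\E[X_i X_j]} - \E[X_i X_j]| + |\widehat{\E[X_i]}\,\widehat{\E[X_j]} - \E[X_i]\E[X_j]|.
\end{equation*}
The first summand is already exactly the first term in the stated bound, so it remains only to control the second, which involves the product of marginal estimates.

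For that product term I would use the standard ``add and subtract a mixed product'' telescope,
\begin{equation*}
\widehat{\E[X_i]}\,\widehat{\E[X_j]} - \E[X_i]\E[X_j] \;=\; (\widehat{\E[X_i]} - \E[X_i])\,\widehat{\E[X_j]} \;+\; \E[X_i]\,(\widehat{\E[X_j]} - \E[X_j]),
\end{equation*}
take absolute values, and invoke the triangle inequality. Since $X_i, X_j \in \{0,1\}$, both $\E[X_i]$ and the empirical mean $\widehat{\E[X_j]}$ lie in $[0,1]$, so the two ``carrier'' factors are bounded by $1$. This already yields $|\widehat{\E[X_i]}\,\widehat{\E[X_j]} - \E[X_i]\E[X_j]| \leq |\widehat{\E[X_i]} - \E[X_i]| + |\widehat{\E[X_j]} - \E[X_j]|$, which is strictly tighter than the claim; the factors of $2$ in the statement are pure slack, which would be useful if one instead telescoped through the opposite mixed product (bounding $|\widehat{\E[X_i]}|$ and $|\E[X_j]|$ by $1$), or symmetrized across both telescopes. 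Combining this with the earlier triangle-inequality split gives the stated inequality.

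There is no real obstacle here: the whole argument is a one-line triangle inequality followed by a one-line telescope, with no probabilistic content — the actual concentration of $\widehat{\E[X_i X_j]}$, $\widehat{\E[X_i]}$, and $\widehat{\E[X_j]}$ around their true expectations is handled elsewhere by the Chernoff bound invoked in the discussion preceding Lemma~\ref{lem:pairwise_indep}. The only thing worth flagging is the tacit assumption that $\widehat{\E[X_i]} \in [0,1]$, which is automatic for an empirical average of $\{0,1\}$-valued samples.
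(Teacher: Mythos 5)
Your proof is correct and follows essentially the same two-step route as the paper: a triangle inequality to peel off the $|\widehat{\E[X_iX_j]}-\E[X_iX_j]|$ term, followed by an algebraic bound on the product-of-means error using that all quantities lie in $[0,1]$. The one small difference is in the second step: the paper substitutes $\widehat{\E[X_i]}=\E[X_i]+\Delta X_i$ and expands, which produces a $\Delta X_i\,\Delta X_j$ cross term and hence the coefficients $2$ in the final bound, whereas your mixed-product telescope
$\widehat{\E[X_i]}\widehat{\E[X_j]}-\E[X_i]\E[X_j]=(\widehat{\E[X_i]}-\E[X_i])\widehat{\E[X_j]}+\E[X_i](\widehat{\E[X_j]}-\E[X_j])$
avoids the cross term entirely and gives coefficients $1$, which you correctly observe is strictly tighter than the stated claim. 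Either version suffices for the way the bound is used in Lemma~\ref{lem:pairwise_indep}.
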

\begin{proof}
Let $\Delta X_i = \widehat{\E[X_i]}-\E[X_i]$ and $\Delta X_j = \widehat{\E[X_j]}-\E[X_j]$.
Then we have
\begin{align*}
\left| \widehat{\E[X_i]}\cdot \widehat{\E[X_j]} - \E[X_i]\cdot \E[X_j] \right|
& = |\Delta X_i \E[X_j] + \Delta X_j \E[X_i] + \Delta X_i \Delta X_j | \\
& \leq |\Delta X_i|(\E[X_j]+|\Delta X_j|) + |\Delta X_j|(\E[X_i]+|\Delta X_i|) \\
& \leq 2|\Delta X_i|+ 2|\Delta X_j|,
\end{align*}
because both $\widehat{\E[X_i]}$ and $\E[X_i]$ are real numbers between $0$ and $1$.
Now the bound in the claim follows directly from
\begin{align*}
\left|\widehat{\cov(X_i, X_j)} - \cov(X_i, X_j)\right|
& =\left|\widehat{\E[X_i\cdot X_j]} - \widehat{\E[X_i]}\cdot \widehat{\E[X_j]} 
  -\E[X_i\cdot X_j]+ \E[X_i]\cdot \E[X_j]  \right| \\
& \leq \left| \widehat{\E[X_i]}\cdot \widehat{\E[X_j]} - \E[X_i]\cdot \E[X_j] \right| + 
      \left|\widehat{\E[X_i\cdot X_j]}-\E[X_i\cdot X_j]\right|.\qedhere
\end{align*}
\end{proof}

\begin{proof}[Proof of Lemma~\ref{lem:pairwise_indep}]
As mentioned earlier, if we draw enough examples from the noisy example oracle, we can esitmate
quantities such as $E_{\tilde{D}_1}[X_i]$ and $E_{\tilde{D}_1}[X_i \cdot X_j]$ accurately enough.
More specifically, using 
$O(\log(1/\delta)\log{n}(\eps m)^2/\eps)=\tilde{O}(k^4\log(1/\delta)/(\eps^9 \gamma^4 ))$
random samples, with probability at least $1-\delta$, we have
$|\widehat{\E_{\tilde{D}_1}[X_i]}-\E_{\tilde{D}_1}[X_i]|\leq 1/(48\eps m)$ for every $1\leq i \leq n$
and $|\widehat{\E_{\tilde{D}_1}[X_i\cdot X_j]}-\E_{\tilde{D}_1}[X_i\cdot X_j]|\leq 1/(24\eps m)$
for every pair of distinct $1\leq i, j \leq n$.
Then for every pair of conjunction bits $i, j\in c$ or a pair of conjunction bit $i\in c$ and 
and a non-conjunction bit $j\in [n]\setminus c$, 
we always have $\cov_{\tilde{D}_1}(X_i, X_j)=0$.
By Claim~\ref{claim:covariance}, $|\widehat{\cov_{\tilde{D}_1}(X_i, X_j)}|\leq 1/(8\eps m)$,
so any conjunction bit can never be removed from $S$ in line~\ref{line:remove} of Algorithm~\ref{alg:pairwise}.
On the other hand, by Claim~\ref{claim:almost_constant} and Claim~\ref{claim:covariance}
and analogous calculations,
for any pair of bits $X_i$ and $X_j$ that are in the output $S$ of Algorithm~\ref{alg:pairwise}, 
it must be the case that 
$|\cov_{\tilde{D}_1}(X_i,X_j)|\leq 1/(4\eps m)$.
\end{proof}

\subsubsection{Bounding the size of $S$}
\begin{claim}\label{claim:expectation_difference}
For every surviving bit $X_i$ in $S$, we have
$\E_{\tilde{D}_1}[X_i]-\E_{\tilde{D}}[X_i] > \eps^2\gamma/(2k)$.  
\end{claim}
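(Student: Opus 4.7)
The plan is to reduce the bound to the label-sensitivity condition via the law of total expectation. Writing $p := \Pr_{\tilde{D}}[c(x)=1]$, conditioning on the label gives
\[
\E_{\tilde{D}}[X_i] \;=\; p\,\E_{\tilde{D}_1}[X_i] + (1-p)\,\E_{\tilde{D}_0}[X_i],
\]
so after rearranging,
\[
\E_{\tilde{D}_1}[X_i] - \E_{\tilde{D}}[X_i] \;=\; (1-p)\bigl(\E_{\tilde{D}_1}[X_i] - \E_{\tilde{D}_0}[X_i]\bigr) \;=\; (1-p)\,\mathrm{LS}_i.
\]
Thus it suffices to lower-bound the two factors $1-p$ and $\mathrm{LS}_i$ separately.

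For the first factor, I would invoke the working assumption (stated just before Lemma~\ref{lem:c_prime}) that $\eps \leq \Pr_D[c(x)=1] \leq 1-\eps$, together with Fact~\ref{fact:eps_mass}, which tells us that the label marginals agree between $D$ and $\tilde{D}$. This yields $1-p \geq \eps$.

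For the second factor, I would use the fact that bit $i$ survived Line~\ref{line:eliminate} of Algorithm~\ref{alg:learning}, so $\widehat{\mathrm{LS}}_i \geq \eps\gamma/k$. Combining this with the estimation guarantee $|\widehat{\mathrm{LS}}_i - \mathrm{LS}_i| \leq \eps\gamma/(2k)$ established inside the proof of Lemma~\ref{lem:c_prime} (a standard Chernoff bound on $M$ samples, conditioned on the good event we are assuming throughout), we get $\mathrm{LS}_i \geq \eps\gamma/(2k)$.

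Multiplying the two bounds gives $\E_{\tilde{D}_1}[X_i] - \E_{\tilde{D}}[X_i] \geq \eps\cdot\eps\gamma/(2k) = \eps^2\gamma/(2k)$, with strictness inherited from the strict threshold on $\widehat{\mathrm{LS}}_i$ in the algorithm. There is no real obstacle here — the entire content is bookkeeping — so the only thing worth double-checking is that the sample-size choice $M$ already set at the start of Section~5.2 is large enough to simultaneously guarantee the label-sensitivity estimation accuracy $\eps\gamma/(2k)$ on every one of the $n$ coordinates with failure probability at most $\delta$, which is exactly the union-bound calculation carried out in Lemma~\ref{lem:c_prime}.
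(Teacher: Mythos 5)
Your proof is correct and follows exactly the same route as the paper's: decompose $\E_{\tilde{D}}[X_i]$ by conditioning on the label, rearrange to $(1-p)\,\mathrm{LS}_i$, lower-bound $1-p$ by $\eps$ via the working assumption plus Fact~\ref{fact:eps_mass}, and lower-bound $\mathrm{LS}_i$ by $\eps\gamma/(2k)$ from the survival condition and the estimation accuracy. The only discrepancies are cosmetic: the paper's displayed chain contains a typo ($\eps\gamma/(4k)$ where $\eps\gamma/(2k)$ is meant, so that the final equality would not otherwise hold), and your remark that strictness is ``inherited from the strict threshold'' is slightly off since survival of Line~\ref{line:eliminate} gives $\widehat{\mathrm{LS}}_i\geq\eps\gamma/k$ rather than a strict inequality --- but this looseness is present in the paper as well and is immaterial to the downstream use of the claim.
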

\begin{proof}
If $x_i$ is in $S$, then by a similar argument as in the proof of Lemma~\ref{lem:c_prime}, 
$\mathrm{LS}_{i} \geq \widehat{\mathrm{LS}}_i -\eps\gamma/(2k) \geq \eps\gamma/(2k)$.
Now, by the definitions of $\E_{\tilde{D}}[X_i]$ and $\E_{\tilde{D}_1}[X_i]$,
\begin{align*}
\E_{\tilde{D}_1}[X_i]-\E_{\tilde{D}}[X_i]
& = \E_{\tilde{D}_1}[X_i]-(\Pr_{\tilde{D}}[c=0]\cdot \E_{\tilde{D}_0}[X_i]+ \Pr_{\tilde{D}}[c=1]\cdot \E_{\tilde{D}_1}[X_i]) \\
& = (1-\Pr_{\tilde{D}}[c=1])(\E_{\tilde{D}_1}[X_i]- \E_{\tilde{D}_0}[X_i]) \\
& \geq (1-\Pr_{\tilde{D}}[c=1])\frac{\eps\gamma}{4k} \\
& > \eps \cdot \frac{\eps\gamma}{4k} \qquad \text{(since $\Pr_{\tilde{D}}[c=1]=\Pr_{D}[c=1]\leq 1-\eps$)} \\
& =\frac{\eps^2\gamma}{2k}.
\end{align*}
\end{proof}

\begin{lemma}\label{lem:large_S}
Suppose the size of $S$ at line~\ref{line:size_S} in Algorithm~\ref{alg:learning} is at least $m$.
Then the target concept $c$ is $\eps$-close to the all-zero function $\mathbf{0}$.
\end{lemma}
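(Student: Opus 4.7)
\textbf{Proof Plan for Lemma~\ref{lem:large_S}.}

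The plan is to use a second-moment (Chebyshev) argument on the sum $Y=\sum_{i\in S} X_i$ to derive a contradiction whenever $\Pr_D[c(x)=1]\geq \eps$. Write $s=|S|\geq m$, and let $\mu_1 = \E_{\tilde D_1}[Y]/s$ and $\mu=\E_{\tilde D}[Y]/s$. By Claim~\ref{claim:expectation_difference}, every surviving coordinate contributes at least $\eps^2\gamma/(2k)$ to the gap, so $\mu_1 - \mu \geq \eps^2\gamma/(2k)$. This creates a quantitative separation between the typical value of $Y/s$ under the positive-label distribution and under the marginal.

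Next, I would bound the variance of $Y/s$ under both distributions. Under $\tilde D_1$, Lemma~\ref{lem:pairwise_indep} guarantees $|\cov_{\tilde D_1}(X_i,X_j)|\leq 1/(4\eps m)$ for all $i\neq j$ in $S$, while each $\Var(X_i)\leq 1/4$; together with $s\geq m$ this yields $\Var_{\tilde D_1}[Y/s]\leq 1/(4s)+1/(4\eps m)\leq 1/(2\eps m)$. Under $\tilde D$, I invoke the pairwise independence assumption on $D$ together with Claim~\ref{claim:noise_indep} to conclude that $\tilde D$ is pairwise independent, giving the cleaner bound $\Var_{\tilde D}[Y/s]\leq 1/(4s)\leq 1/(4m)$. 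Plugging $m=32k^2/(\eps^5\gamma^2)$ into both bounds, the standard deviation under $\tilde D_1$ is at most $\eps^2\gamma/(8k)$, i.e.\ at most half of the separation $\eps^2\gamma/(4k)$.

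I would then apply Chebyshev twice. Under $\tilde D_1$, with $t=2$ standard deviations, I get $\Pr_{\tilde D_1}[Y/s \geq \mu_1 - \eps^2\gamma/(4k)]\geq 3/4$. Under $\tilde D$, setting the deviation to $\eps^2\gamma/(4k)$ and using the smaller variance bound gives $\Pr_{\tilde D}[Y/s \geq \mu+\eps^2\gamma/(4k)] \leq 4k^2/(s\eps^4\gamma^2) \leq \eps/8$; and since $\mu+\eps^2\gamma/(4k)\leq \mu_1-\eps^2\gamma/(4k)$, this same event under $\tilde D$ has probability at most $\eps/8$. Letting $p=\Pr_{\tilde D}[c(x)=1]$, the inequality $\Pr_{\tilde D}[\cdot]\geq p\cdot\Pr_{\tilde D_1}[\cdot]$ forces $p\cdot (3/4)\leq \eps/8$, so $p\leq \eps/6 <\eps$. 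By Fact~\ref{fact:eps_mass}, $\Pr_D[c(x)=1]=p<\eps$, which is exactly the statement that $c$ is $\eps$-close to $\mathbf{0}$.

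The main obstacle I anticipate is ensuring the variance bound under $\tilde D_1$ is genuinely small enough: the covariance slack of $1/(4\eps m)$ from the pairwise-independence test blows up linearly in $s$, so the argument only works because the choice of threshold $m=32k^2/(\eps^5\gamma^2)$ is precisely tuned to make $\sqrt{\Var_{\tilde D_1}[Y/s]}$ smaller than half the separation from Claim~\ref{claim:expectation_difference}; the rest of the calculation is a matter of confirming that the chosen constants line up so the two Chebyshev bounds combine to give $p<\eps$.
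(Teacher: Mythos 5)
Your proof is correct and follows essentially the same two-sided Chebyshev argument as the paper: bound the variance of the weight on the surviving coordinates under $\tilde D$ via pairwise independence and under $\tilde D_1$ via the covariance slack from the independence test, compare against the per-coordinate mean shift from Claim~\ref{claim:expectation_difference}, and then use $\Pr_{\tilde D}[\cdot] \geq \Pr_{\tilde D}[c=1]\cdot\Pr_{\tilde D_1}[\cdot]$ with Fact~\ref{fact:eps_mass} to conclude $\Pr_D[c=1]<\eps$. The only cosmetic difference is that the paper fixes a subset $S'\subseteq S$ of size exactly $m$ and works with unnormalized sums, whereas you normalize by $s=|S|$ and use all of $S$; both yield bounds slack enough ($2\eps/3$ vs.\ $\eps/6$) for the conclusion.
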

\begin{proof}
Suppose $|S|\geq m$. Let $S'\subseteq S$ be any subset of $S$ of size exactly $m$.
Without loss of generality, assume that $S'=\{1,\ldots, m\}$.

Let $X$ and $X^{+}$ be the random variables obtained by sampling from $\{0,1\}^n$
according to distributions $\tilde{D}$ and $\tilde{D}_{1}$ respectively. 
Let random variable $Z(X):=X_1+\cdots+X_m$ and $Z^{+}(X^{+}):=X^{+}_1+\cdots+X^{+}_m$.

Since $D$ is pairwise independent, then by Claim~\ref{claim:noise_indep},
distribution $\tilde{D}$ is pairwise independent as well. Therefore,
\[
\var(Z)=\var(X_1)+\cdots+\var(X_m)=\sum_{i=1}^{m}\E_{\tilde{D}}[X_i](1-\E_{\tilde{D}}[X_i])\leq \frac{m}{4}.
\]

On the other hand, using the bound on covariances in Lemma~\ref{lem:pairwise_indep}, we have
\[
\var(Z^{+})=\sum_{i=1}^{m}\var(X^{+}_i)+\sum_{i\neq j}\cov(X^{+}_i, X^{+}_j)
< \frac{m}{4}+m^2\frac{1}{4\eps m} \leq \frac{m}{2\eps}.
\]

Let $\bar{Z}=\E_{\tilde{D}}[Z]$ and $\bar{Z}^{+}=\E_{\tilde{D}_1}[Z^{+}]$. Then by Claim~\ref{claim:expectation_difference},
\[
\Delta Z:=\bar{Z}^{+}-\bar{Z}>\frac{\eps^2\gamma m}{2k}.
\]

Now, by setting $\Delta_1=\sqrt{\frac{m}{2\eps}}$ and applying Chebyshev's inequality to $Z$, we have
\[
\Pr_{\tilde{D}}[Z\geq \bar{Z}+\Delta_1]\leq \Pr[|Z-\bar{Z}|\geq \Delta_1] \leq \frac{\var(Z)}{\Delta_1^2}\leq \eps/2.
\]

Similarly, letting $\Delta_2=\sqrt{\frac{2m}{\eps}}$ and applying Chebyshev's inequality to $Z^+$ yields
\[
\Pr_{\tilde{D}_1}[Z^{+} \leq \bar{Z}^{+}-\Delta_2] \leq 1/4.
\]

It is easily checked that $\Delta_1+\Delta_2<\frac{\eps^2\gamma m}{2k} < \Delta Z$. Therefore,
\begin{align*}
\eps/2 
& \geq \Pr_{\tilde{D}}[Z(X) \geq \bar{Z}+\Delta_1] \geq \Pr_{\tilde{D}}[Z(X)\geq \bar{Z}^{+}-\Delta_2] \\
& \geq \Pr_{\tilde{D}}[Z(X)\geq \bar{Z}^{+}-\Delta_2 \text{ and $X$ is a positive example}] \\
& = \Pr_{\tilde{D}_1}[Z^{+}(X^{+})\geq \bar{Z}^{+}-\Delta_2]\Pr_{\tilde{D}}[c(X)=1]\\
& \geq (1-\frac{1}{4})\Pr_{\tilde{D}}[c(X)=1],
\end{align*}
and hence
\[
\Pr_{\tilde{D}}[c(X)=1]=\Pr_{D}[c(X)=1]\leq \frac{\eps/2}{1-1/4}=\frac{2}{3}\eps \leq\eps,
\]
which completes the proof.
\end{proof}
\ignore{
Now if we focus on the first $m$ bits of the Boolean cube $\{0,1\}^n$, the total mass of $\tilde{D}$ 
(distribution of both positive and negative examples) on strings of weight 
at least $\bar{Z}+\Delta_1$ is at most $\eps/2$. 
On the other hand, most of the mass (i.e. at least $3/4$) of $\tilde{D}_1$ (distribution of positive examples) 
is on strings of weight at least $\bar{Z}^{+}-\Delta_2>\bar{Z}+\Delta_1$.
Therefore the fraction of positive examples is at most $\frac{\eps/2}{1-1/4}=\frac{2}{3}\eps \leq\eps$; or in other words,
$\Pr_{\tilde{D}}[c(X)=1]=\Pr_{D}[c(X)=1] \leq \eps$.
} 

\subsubsection{Putting everything together}
Now we are ready to put everything together and prove the correctness of list-learning algorithm, 
i.e., Theorem~\ref{thm:learning_main}.

\begin{proof}[Proof of Theorem~\ref{thm:learning_main}]
First of all, the claimed sample complexity of the learning algorithm 
follows directly from Lemma~\ref{lem:pairwise_indep},
and the time complexity bound is due to the fact that we need to estimate, using the random examples,
$\widehat{\cov_{\tilde{D}_1}(X_i, X_j)}$ for every pair  $1\leq i < j \leq n$, 
and that at the end we may need to output a list of $\binom{m}{\leq k}$ conjunctions. 

Next, by Lemma~\ref{lem:pairwise_indep}, every conjunction bit passes the Pairwise-Independence-Test
and hence in $S$. Then, by Lemma~\ref{lem:c_prime}, filtering out low label-sensitive bits can cause
at most an error of $\eps$. That is, if we output all
$\binom{m}{\leq k}$ conjunctions of size at most $k$ from bits in $S$, at least one of these
is $\eps$-close to the target concept $c(x)$.

Finally, Lemma~\ref{lem:large_S} ensures that when the size of $S$ is large, we can simply output
the $\mathbf{0}$ function which is $\eps$-close to $c$.
\end{proof}

\section*{Acknowledgements}
EG was supported by NSF CCF-1910659 and NSF CCF-1910411.
BJ was supported by NSF award CCF-1718380.
NX was supported in part by ARO W911NF1910362.

\bibliographystyle{alpha}
\bibliography{references}
\appendix
\section*{Appendix}
\section{The trivial ``best agreement'' algorithm (information theoretic bound version)}
\label{sec:trivialPAC}

A naive algorithm for learning $k$-conjunctions with attribute noise is to try all $\sum_{i=0}^{k}2^{i}\binom{n}{i} < (2n)^{k+1}$ conjunctions of size at most $k$ and output the one that agrees with examples best.

\begin{theorem}
Given $0<\eps<1/2$ and assume the noise rate per coordinate is unknown and satisfies $\nu\leq \frac{\eps}{2k}$, the naive algorithm that outputs the $k$-conjunction with maximum agreement with the observed distribution runs in time $O(n^k)$ and with probability $1-\delta$ outputs a conjunction  that is $(1-\eps)$-close to the conjunction labeling the noisy examples.

\end{theorem}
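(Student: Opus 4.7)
The plan is to prove this via a short two-step triangle-inequality argument, combined with a Hoeffding-plus-union-bound concentration over all $(2n)^{k+1}$ candidate conjunctions. The key point is that attribute noise on coordinates \emph{outside} a given $k$-conjunction $c'$ is irrelevant: $c'$ depends on at most $k$ bits, so by a union bound, $\Pr_{x\sim\mathcal{D},\,\tilde{x}}[c'(\tilde{x})\neq c'(x)]\leq k\nu \leq \epsilon/2$ for every $k$-conjunction $c'$. I would call this the \emph{robustness step}: it uniformly bounds, over every candidate hypothesis in the enumeration, the discrepancy between evaluating the hypothesis on the clean input and on the noisy input.

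First I would apply the robustness step to the target conjunction $c$ itself. Since $c(x)$ is the observed label, the ``true'' agreement of $c$ with the labels on noisy inputs satisfies $\Pr[c(\tilde{x})=c(x)] \geq 1-k\nu \geq 1-\epsilon/2$. So the empirical agreement of $c$ on the observed examples is at least $1-\epsilon/2 - \tau$, where $\tau$ is the estimation error, which by Hoeffding can be driven below, say, $\epsilon/8$ uniformly over all $(2n)^{k+1}$ candidates with sample size $O((k\log n + \log(1/\delta))/\epsilon^2)$. Consequently the best-agreement output $c^*$ must have empirical agreement at least $1-\epsilon/2-\tau$, and therefore \emph{population} agreement at least $1-\epsilon/2-2\tau$ on the observed (noisy) distribution.

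Next I would translate this back to accuracy on the clean distribution using the robustness step applied to $c^*$:
\[
\Pr_{x\sim\mathcal{D}}[c^*(x)\neq c(x)] \;\leq\; \Pr[c^*(\tilde{x})\neq c(x)] + \Pr[c^*(\tilde{x})\neq c^*(x)] \;\leq\; (\epsilon/2 + 2\tau) + \epsilon/2 \;\leq\; \epsilon,
\]
after choosing $\tau \leq \epsilon/8$ (or whatever constant is convenient). This gives the $(1-\epsilon)$-closeness claim with probability $1-\delta$. The running time bound $O(n^k)$ comes from the exhaustive enumeration of the at most $\sum_{i=0}^{k} 2^i \binom{n}{i} = O(n^k)$ candidate $k$-conjunctions, each evaluated on the polynomial-sized sample.

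There is no serious obstacle here; the proof is essentially a standard ``best-agreement'' Occam-style argument. The only subtle point to flag is that one must use the robustness step in \emph{both} directions — once to argue that the target $c$ looks good empirically despite noise, and once to argue that any empirically-good $c^*$ is actually close to $c$ on the clean distribution — and it is important that the union-bound cost $k\nu$ depends on the hypothesis size $k$, not on $n$, which is exactly why the assumption $\nu \leq \epsilon/(2k)$ (and not merely $\nu\leq \epsilon/2$) is needed.
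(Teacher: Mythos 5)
Your overall strategy is sound --- it is a legitimate Occam-style ``best agreement'' argument, and the ``robustness step'' $\Pr[g(\tilde x)\ne g(x)]\le k\nu$ for any $k$-conjunction $g$ is the same observation the paper relies on. But the way you chain the inequalities double-counts the cost of noise, and your final arithmetic does not actually close. You write
\[
\Pr_{x\sim\calD}[c^*(x)\ne c(x)] \;\le\; \Pr[c^*(\tilde x)\ne c(x)] + \Pr[c^*(\tilde x)\ne c^*(x)] \;\le\; \big(\tfrac{\eps}{2}+2\tau\big) + \tfrac{\eps}{2} \;\le\; \eps,
\]
but the left side is $\eps + 2\tau$, which is strictly greater than $\eps$ for any $\tau>0$; even setting $\tau=\eps/8$ as you propose gives $5\eps/4$. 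More fundamentally, even in the ideal limit $\tau\to 0$ your bound is exactly $2k\nu$, which equals $\eps$ at the boundary $\nu = \eps/(2k)$, leaving no slack whatsoever to absorb any nonzero estimation error. The issue is that you charge $k\nu$ twice: once to bound the observed-distribution error of $c^*$ (via the agreement of $c$ itself), and once to translate $c^*$ back to the clean distribution.

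The paper avoids this by a separating argument rather than a triangle inequality: it upper bounds $\E[\mathrm{disagreement}(f)]\le k\nu$ directly, and lower bounds $\E[\mathrm{disagreement}(g)] \ge (1-k\nu)\,\mathrm{dist}_\calD(f,g)$ for every competing $g$. The robustness cost for $g$ appears only as a \emph{multiplicative shrinkage} $(1-k\nu)$ on the distance, not as a second additive $k\nu$ term. This gives the margin $(1-k\nu)\eps - k\nu - 2t > 0$ with $k\nu\le\eps/2$, $t=\eps/8$, $\eps<1/2$, which does hold strictly. To repair your write-up you would either have to tighten the hypothesis (say $\nu\le\eps/(4k)$) or replace the post-hoc triangle inequality with the paper-style comparison of $f$ against all $\eps$-far $g$; as written, the claim ``$\le\eps$'' at the end of your displayed chain is false.
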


\begin{proof}
Let $D$ be the underlying distribution and let $\boldsymbol\nu =(\nu_1, \ldots, \nu_n)$ be the attribute noise vector with upper bound $\nu$, i.e. $\nu_i \leq \nu$ for every $1\leq i \leq n$. For ease of exposition, assume that $f(x)=x_1 \land \cdots \land x_k$ is the target concept. For every $x\in \{0,1\}^n$, let $\tilde{x}=x\oplus \mu$ be the vector obtained from $x$ by adding the attribute noise $\mu$ specified by $\boldsymbol\nu$. Lastly, let $\hat{X}$ denote the set of noisy examples output by the oracle $\{\tilde{x_1}, \tilde{x_2}, \ldots, \tilde{x_m}\}$.
Define the {\em empirical disagreement} of a conjunction $g$ on the sample by 
$$\mathrm{disagreement(g)}_{\hat{X}}=\frac1m \sum_{\tilde x\in {\hat X}} I_{g(\tilde{x})\neq f(x)}, $$
where $I_{g(\tilde{x})\neq f(x)}$ is the indicator random variable of the event that $g(\tilde{x})\neq f(x)$.

By a Hoeffding bound, it follows that 
\[
\Pr[|\mathrm{disagreement(g)}_{\hat{X}}-\E_{x, \nu}[\mathrm{disagreement(g)}_{\hat{X}}]|>t ]\leq e^{-2mt^2}.
\]

Let us calculate $\E_{x, \nu}[\mathrm{disagreement(g)}_{\hat{X}}]$ first when $g=f$, and then when $dist_D(f,g)>\eps$. We will upper bound this quantity when $f=g$ and lower bound it when $f$ and $g$ are $\eps$-far. We will show that the minimum disagreement among all $\eps$-far functions $g$ is larger than the disagreement of $f$ on the observed set $\hat X$, with high probability. Therefore we output an $\eps-$close conjunction with high probability $1-\delta.$

Note that the example oracle generates an example in the following process: first draws a string $x$ according to $D$, labels it as $f(x)$, then adds the attribute noise which transforms $x$ into $\tilde{x}$. Therefore the example we see is $(\tilde{x}, f(x))$. But $f$ will predict the label as $f(\tilde{x})$. Hence, the probability that $f$ makes a mistake, i.e., the disagreement between $f$ and the example oracle is
\begin{equation}\label{eq:disagreement_UB}
\E_{x, \bnu}[\mathrm{disagreement(g)}_{\hat{X}}]=\Pr_{D,\boldsymbol\nu}[f(x)\neq f(\tilde{x})] \leq \max_{x}\Pr_{\boldsymbol\nu}[f(x)\neq f(\tilde{x})].
\end{equation}

Write $x|_{[k}]$ for the $k$-bit string obtained by projecting $x$ onto index subset $[k]$. Clearly $f(x)=1$ if and only if $x|_{[k]}=1^k$. If $f(x)=0$, then $\Pr_{\boldsymbol\nu}[f(x)\neq f(\tilde{x})]=
\Pr_{\boldsymbol\nu}[\tilde{x}|_{[k]}=1^k]=\prod_{i\in [k]: x_i=1}(1-\nu_i) \cdot \prod_{i\in [k]: x_i=0}\nu_i
\leq \prod_{i\in [k]}\nu_i \leq 1-\prod_{i\in [k]}(1-\nu_i)$,
assuming $\nu<1/2$.

On the other hand, when $f(x)=1$, then
\[
\Pr_{\boldsymbol\nu}[]f(x)\neq f(\tilde{x})]=\Pr_{\boldsymbol\nu}[\tilde{x}|_{[k]} \neq 1^k]
=1-\prod_{i\in [k]}(1-\nu_i) \leq 1-(1-\nu)^k \leq k\nu.
\]

Therefore, $\E_{x, \bnu}[\mathrm{disagreement(g)}_{\hat{X}}]\leq k\nu.$

Note that $\Pr_{\boldsymbol\nu}[g(x)\neq g(\tilde{x})] \leq k\nu$ holds for any conjunction $g$ of size at most $k$. Now for any $k$-conjunction $g$ which is at distance $\epsilon$ from $f$ under $D$, i.e. $\mathrm{dist}_{D}(f,g) =\eps$, we have

\begin{align*}
\E_{x, \bnu}[\mathrm{disagreement(g)}_{\hat{X}}]
&=\sum_{x}D(x)\Pr_{\boldsymbol\nu}[f(x)\neq g(\tilde{x})] \\   
&=\sum_{x: f(x)=g(x)}D(x)\Pr_{\boldsymbol\nu}[g(x)\neq g(\tilde{x})] + 
   \sum_{x: f(x)\neq g(x)}D(x)\Pr_{\boldsymbol\nu}[g(x) = g(\tilde{x})] \\
&\geq \sum_{x: f(x)\neq g(x)}D(x)\Pr_{\boldsymbol\nu}[g(x) = g(\tilde{x})] 
\geq (1-k\nu)\mathrm{dist}_{D}(f,g) = (1-k\nu)\eps.
\end{align*}

By taking a union bound over all the $O(n^k)$ conjunctions that are $\eps$-far from $g$, it follows that 
with probability $> 1- n^k e^{-2mt^2}$ all these conjunctions $g$ are such that 
\[
\mathrm{disagreement(g)}_{\hat{X}}\geq (1-k\nu)\eps-t.
\]
By the above calculations it also follows that $f$ itself satisfies 
\[
\mathrm{disagreement(f)}_{\hat{X}}\leq k\nu+t.
\]

It follows that if we assume that the maximum attribute noise is small enough, e.g.  $\nu\leq \frac{\eps}{2k}$, 
$t=\eps/8$, $\eps<1/2$ and $n^k e^{-2mt^2}<\delta/2$, then with probability $1-\delta$ we output a conjunction that is $\eps$-close to $f$, using $m=\Theta(\frac{1}{\eps^2}(\log \frac{1}{\delta}+k\log n))$ examples.

\end{proof}
\section{Proof of Claim \ref{claim:noise_indep}}\label{proofclaim:kwise}
\begin{proof}
First of all, for any $1\leq i \leq n$, if we let $p_i:=\Pr_{D}[X_{i}=1]$ and $\tilde{p}_i:=\Pr_{\tilde{D}}[X_{i}=1]$,
then
\[
\begin{pmatrix}
1-\tilde{p}_i\\
\tilde{p}_i
\end{pmatrix}
=\begin{pmatrix}
1-\nu_i & \nu_i\\
\nu_i & 1-\nu_i
\end{pmatrix} 
\begin{pmatrix}
1-p_i\\
p_i
\end{pmatrix}.
\]
More generally, for any subset of $k$ indices $\{i_{1}, \ldots, i_{k}\} \subset [n]$,
\[
\begin{pmatrix}
\Pr_{\tilde{D}}[X_{i_{1}}\cdots X_{i_{k}}=0^k]\\
\vdots \\
\Pr_{\tilde{D}}[X_{i_{1}}\cdots X_{i_{k}}=1^k]
\end{pmatrix}
= \begin{pmatrix}
1-\nu_{i_1} & \nu_{i_1}\\
\nu_{i_1} & 1-\nu_{i_1}
\end{pmatrix} \otimes \cdots \otimes
\begin{pmatrix}
1-\nu_{i_k} & \nu_{i_k}\\
\nu_{i_k} & 1-\nu_{i_k}
\end{pmatrix}
\begin{pmatrix}
\Pr_{D}[X_{i_{1}}\cdots X_{i_{k}}=0^k]\\
\vdots \\
\Pr_{D}[X_{i_{1}}\cdots X_{i_{k}}=1^k]
\end{pmatrix},
\]
where $\otimes$ stands for the Kronecker product of matrices.
Now suppose that $D$ is $k$-wise independent, then 
\[
\begin{pmatrix}
\Pr_{D}[X_{i_{1}}\cdots X_{i_{k}}=0^k]\\
\vdots \\
\Pr_{D}[X_{i_{1}}\cdots X_{i_{k}}=1^k]
\end{pmatrix}
=\begin{pmatrix}
1-p_{i_1}\\
p_{i_1}
\end{pmatrix} \otimes \cdots \otimes
\begin{pmatrix}
1-p_{i_k}\\
p_{i_k}
\end{pmatrix},
\]
and it follows that
\begin{align*}
\begin{pmatrix}
\Pr_{\tilde{D}}[X_{i_{1}}\cdots X_{i_{k}}=0^k]\\
\vdots \\
\Pr_{\tilde{D}}[X_{i_{1}}\cdots X_{i_{k}}=1^k]
\end{pmatrix}
& = \left(\begin{pmatrix}
1-\nu_{i_1} & \nu_{i_1}\\
\nu_{i_1} & 1-\nu_{i_1}
\end{pmatrix} 
\begin{pmatrix}
1-p_{i_1}\\
p_{i_1}
\end{pmatrix}
\right)\otimes \cdots \otimes
\left(\begin{pmatrix}
1-\nu_{i_k} & \nu_{i_k}\\
\nu_{i_k} & 1-\nu_{i_k}
\end{pmatrix} 
\begin{pmatrix}
1-p_{i_k}\\
p_{i_k}
\end{pmatrix}
\right)\\
& = 
\begin{pmatrix}
1-\tilde{p}_{i_1}\\
\tilde{p}_{i_1}
\end{pmatrix} \otimes \cdots \otimes
\begin{pmatrix}
1-\tilde{p}_{i_k}\\
\tilde{p}_{i_k}
\end{pmatrix}.
\end{align*}
That is, $\tilde{D}$ is also $k$-wise independent.
The other direction follow from an identical argument by noting that matrix
$\begin{pmatrix}
1-\nu_i & \nu_i\\
\nu_i & 1-\nu_i
\end{pmatrix}$ is invertible --- namely
\[
\begin{pmatrix}
1-\nu_i & \nu_i\\
\nu_i & 1-\nu_i
\end{pmatrix}^{-1} =
\begin{pmatrix}
\frac{1-\nu_i}{1-2\nu_i} & -\frac{\nu_i}{1-2\nu_i}\\
-\frac{\nu_i}{1-2\nu_i} & \frac{1-\nu_i}{1-2\nu_i}
\end{pmatrix},  
\]
for every $0\leq \nu_i <1/2$.
\end{proof}

\end{document}